\crefname{assumption}{Assumption}{Assumptions}
\newtheorem{lemma}{Lemma}
\newtheorem{theorem}{Theorem}
\newtheorem{assumption}{Assumption}
\newtheorem{corollary}{Corollary}
\newtheorem{definition}{Definition}
\colorlet{shadecolor}{blue!20}
\title{A Variance-Reduced Cubic-Regularized Newton for Policy Optimization}
\author{Cheng Sun, Zhen Zhang, Shaofu Yang
\thanks{C. Sun, Z. Zhang, and S. Yang are with the School of Computer Science and Engineering, Southeast University, Nanjing, China 
(email:\{cheng\_sun, zhenz, sfyang\}@seu.edu.cn).}
\thanks{\it Corresponding author: Shaofu Yang (sfyang@seu.edu.cn).}}
\begin{document}
\maketitle

\begin{abstract}
    In this paper, we study a second-order approach to policy optimization in reinforcement learning. Existing second-order methods often suffer from suboptimal sample complexity or rely on unrealistic assumptions about importance sampling. To overcome these limitations, we propose VR-CR-PN, a \emph{variance-reduced cubic-regularized policy Newton} algorithm. To the best of our knowledge, this is the first algorithm that integrates Hessian-aided variance reduction with second-order policy optimization, effectively addressing the distribution shift problem and achieving best-known sample complexity under general nonconvex conditions but without the need for importance sampling. We theoretically establish that VR-CR-PN achieves a sample complexity of $\tilde{\mathcal{O}}(\epsilon^{-3})$ to reach an $\epsilon$-second-order stationary point, significantly improving upon the previous best result of $\tilde{\mathcal{O}}(\epsilon^{-3.5})$ under comparable assumptions. As an additional contribution, we introduce a novel Hessian estimator for the expected return function, which admits a uniform upper bound independent of the horizon length $H$, allowing the algorithm to achieve horizon-independent sample complexity.
	\end{abstract}

\begin{IEEEkeywords}
Reinforcement Learning, Policy Optimization, Second-Order Optimization, Variance Reduction.							
\end{IEEEkeywords}

\section{Introduction}
	\label{Introduction}
        Policy optimization is a fundamental approach in reinforcement learning (RL), aiming to learn a parameterized policy that maximizes expected cumulative reward, and it typically relies on first-order methods due to their simplicity and low per-iteration cost \cite{sutton1999policy,schulman2015trust,schulman2017proximal,kakade2001natural}.
        However, their convergence can be slow and they often get trapped in suboptimal first-order stationary points. Second-order methods, in contrast, offer the potential to accelerate convergence and improve final performance by driving the policy toward second-order stationary points, which better characterize local optimality in non-convex settings.
        
        While recent advances such as Cubic-Regularized Policy Newton (CR-PN) methods have demonstrated the feasibility of second-order optimization in RL\cite{pmlr-v238-maniyar24a}, these methods still suffer from suboptimal sample complexity, primarily due to the lack of variance reduction mechanisms. As sample efficiency is a central concern in RL—where data collection is often costly—reducing sample complexity is not only theoretically desirable but also practically critical.
        Variance reduction mechanisms have been widely applied in optimization to improve sample efficiency. However, unlike supervised learning, policy-dependent data distributions in RL introduce distribution shift when estimating gradients or Hessians across different policy iterates. This makes classical variance reduction techniques (e.g., SVRG\cite{johnson2013accelerating}, PAGE\cite{li2021page}) prone to bias unless importance sampling is employed. Yet, importance sampling itself requires restrictive assumptions that are hard to justify or verify in practice.
        
        Recent work by Masiha et al. \cite{masiha2022stochastic} incorporates variance reduction into second-order policy optimization by leveraging the Polyak–Łojasiewicz (PL) condition. Their approach achieves stronger sample complexity bounds, but at the expense of stricter assumptions, such as the weak gradient dominance property and the boundedness assumption on importance sampling. 

        A natural question is, can variance reduction be integrated with second-order policy gradient methods under milder assumptions?
        To address this question, we propose VR-CR-PN, a variance-reduced cubic-regularized Newton algorithm for policy optimization. 
        Unlike previous approaches, our method bypasses the need for importance sampling by introducing a novel technique based on higher-order Taylor expansion, which estimates gradient differences via Hessian-vector products. This allows for variance reduction across policy iterates without incurring bias, even in the presence of distributional shift.
        The proposed algorithm overcome the limitations of existing second-order policy optimization methods, achieves convergence to second-order stationary points with an improved sample complexity of $\tilde{O}(\epsilon^{-3})$, matching the theoretical lower bound under standard assumptions and outperforming the state-of-the-art complexity of $\tilde{O}(\epsilon^{-3.5})$.

        In addition to the contributions above, we address a critical limitation of existing Hessian estimators in long-horizon policy optimization. By analyzing the estimator structure, we identify and eliminate redundant terms that inflate the norm bound, and design a refined estimator that enables scalable variance reduction even under extended truncation lengths. Detailed construction and analysis are provided in \cref{sec:new_hessian_estimator}.

        Our contributions include:
        \begin{itemize}
            \item A new algorithm that converges to a second-order stationary point with sample complexity improved to $\tilde{O}(\epsilon^{-3})$, \textbf{outperforming the state-of-the-art complexity} of $\tilde{O}(\epsilon^{-3.5})$, and matching the theoretical lower bound under standard assumptions(see \cref{tab:compare-sample-complexity}). To our knowledge, this is the first second-order policy optimization method in reinforcement learning that integrates variance reduction without relying on importance sampling.
            \item A new Hessian estimator with truncation-independent bound. We propose a refined estimator that removes unnecessary zero-mean terms in the conventional Hessian estimation, resulting in a significantly tighter upper norm bound. Notably, this bound is independent of the truncation horizon $H$, allowing the sample size $m$ to be chosen independently of $H$ and thereby enabling sample-efficient learning even under long-horizon scenarios. We also establish a lower bound on the required $H$ for this estimator.
            \item Empirical validation showcasing practical effectiveness. Empirical experiments on the CartPole-v1 benchmark demonstrate that VR-CR-PN achieves consistently higher return and lower variance compared to CR-PN under identical sample complexity. The results validate the effectiveness of our variance-reduced estimator in accelerating convergence and stabilizing training, confirming the practical advantages predicted by our theoretical analysis.
        \end{itemize}
    
\section{Related Work}
    \paragraph{Variance Reduction.}
    Variance reduction techniques play an important role in improving the sample efficiency of stochastic optimization\cite{paczolay2024sample}. Classical methods such as SVRG~\cite{johnson2013accelerating}, SAGA~\cite{defazio2014saga}, and STORM~\cite{cutkosky2019momentum} have been widely applied in supervised learning, where gradient estimators constructed from mini-batches and periodically updated reference points are effective under i.i.d.~data assumptions.

In reinforcement learning, however, the data distribution evolves with the policy, violating these assumptions. Direct application of such methods is challenging due to distribution shift across iterates. Existing approaches in RL, such as SVRPG~\cite{papini2018stochastic}, SRVR-PG~\cite{xu2019sample}, or PAGE-PG~\cite{gargiani2022page}, introduce correction terms to preserve unbiasedness and reduce variance. However, these methods often rely on importance sampling, which in turn necessitates stronger assumptions to mitigate the induced bias from reused samples.

Regarding this issue, our method builds on the estimator proposed by NPG-HM~\cite{feng2024global}, and integrates it into a second-order optimization framework without importance sampling, while maintaining convergence guarantees under standard smoothness assumptions.
\paragraph{Second-Order Policy Optimization.}
There has been increasing interest in applying second-order optimization to reinforcement learning, motivated by its potential to accelerate convergence and escape saddle points. Yang et al.~\cite{yang2021sample} analyze the sample complexity of vanilla policy gradient for finding second-order stationary points (SOSP), establishing a $\widetilde{\mathcal{O}}(\epsilon^{-9/2})$ bound without relying on strong convexity or PL conditions. Wang et al.~\cite{wang2022stochastic} propose a stochastic cubic-regularized policy gradient (SCR-PG) method, which improves the rate to $\widetilde{\mathcal{O}}(\epsilon^{-7/2})$ by leveraging cubic regularization and Hessian-vector products.

More recently, Maniyar et al.~\cite{pmlr-v238-maniyar24a} revisit cubic-regularized Newton methods in RL, proposed the algorithm CR-PN and provide a more complete and streamlined analysis. While both works achieve comparable complexity results, CR-PN presents a simplified algorithmic variant and introduces a distinct proof strategy that simplifies the analysis under verifiable assumptions. Our work builds upon this foundation by incorporating variance reduction into the cubic-regularized Newton framework, leading to the first variance-reduced second-order method for policy optimization with \textbf{improved theoretical and empirical sample efficiency}. 
\paragraph{Sample Complexity Analysis.}
Under standard smooth nonconvex assumptions, traditional first-order methods achieve a time complexity of $\tilde{\mathcal{O}}(\epsilon^{-2})$ and a sample complexity of $\tilde{\mathcal{O}}(\epsilon^{-4})$. Variance reduction techniques do not alter the time complexity but reduce the sample complexity to $\tilde{\mathcal{O}}(\epsilon^{-3})$. In contrast, classical second-order methods achieve both improved time complexity of $\tilde{\mathcal{O}}(\epsilon^{-3/2})$ and better sample complexity of $\tilde{\mathcal{O}}(\epsilon^{-7/2})$ compared to traditional first-order methods. Applying variance reduction to these methods similarly preserves the time complexity while reducing the sample complexity to $\tilde{\mathcal{O}}(\epsilon^{-3})$. See \cref{tab:compare-sample-complexity} for a detailed comparison.
Our method achieves a sample complexity of $\widetilde{\mathcal{O}}(\epsilon^{-3})$ for estimating gradients and $\widetilde{\mathcal{O}}(\epsilon^{-5/2})$ for estimating Hessians, under standard smoothness assumptions. To the best of our knowledge, these are the strongest known complexity bounds among second-order policy optimization methods that do not rely on restrictive structural conditions.

In summary, our work bridges two previously separate lines of research—cubic-regularized second-order optimization and variance reduction—in the context of reinforcement learning, and provides both improved theoretical guarantees and practical implications.

\begin{table}[tp!]
\centering
\caption{Comparison between our algorithm and existing algorithms. \textbf{IC:} Iteration complexity; \textbf{SC:} Sample complexity; \textbf{SOSP:} Converging to the second-order stationary point; \textbf{IS:} Requiring importance sampling.}
\label{tab:compare-sample-complexity}
\renewcommand{\arraystretch}{1.2}
\begin{tabular}{lcccc}
\toprule
\textbf{Algorithm} &
\makecell[c]{\textbf{IC}} &
\makecell[c]{\textbf{SC}} &
\makecell[c]{\textbf{SOSP}} &
\makecell[c]{\textbf{IS}} \\
\midrule
PG & $\tilde{\mathcal{O}}(\epsilon^{-2})$ & $\tilde{\mathcal{O}}(\epsilon^{-4})$ & \textcolor{red}{$\times$} & \textcolor{green!50!black}{$\checkmark$} \\
SVRPG & $\tilde{\mathcal{O}}(\epsilon^{-2})$ & $\tilde{\mathcal{O}}(\epsilon^{-10/3})$ & \textcolor{red}{$\times$} & \textcolor{red}{$\times$} \\
PAGE-PG & $\tilde{\mathcal{O}}(\epsilon^{-2})$ & $\tilde{\mathcal{O}}(\epsilon^{-3})$ & \textcolor{red}{$\times$} & \textcolor{red}{$\times$} \\
NPG-HM & $\tilde{\mathcal{O}}(\epsilon^{-2})$ & $\tilde{\mathcal{O}}(\epsilon^{-3})$ & \textcolor{red}{$\times$} & \textcolor{green!50!black}{$\checkmark$} \\
CR-PN & $\tilde{\mathcal{O}}(\epsilon^{-3/2})$ & $\tilde{\mathcal{O}}(\epsilon^{-7/2})$ & \textcolor{green!50!black}{$\checkmark$} & \textcolor{green!50!black}{$\checkmark$} \\
\textbf{VR-CR-PN} & $\tilde{\mathcal{O}}(\epsilon^{-3/2})$ & $\tilde{\mathcal{O}}(\epsilon^{-3})$ & \textcolor{green!50!black}{$\checkmark$} & \textcolor{green!50!black}{$\checkmark$} \\
\bottomrule
\end{tabular}
\end{table}

\paragraph{Notation}
\begin{itemize}
    \item Given a function $f:\RR^d \to \RR$, we use $\nabla^2 f \in \RR^{d\times d}$ and $\nabla^3 f \in \RR^{d\times d\times d}$ to represent the second-order and third-order derivative of $f$, respectively. 
    \item Given two tensors $A \in \RR^{ {d_1} \times \cdots \times d_{m}}$ and $B \in \RR^{p_{1} \times \cdots \times p_{n}}$, we use $\otimes$ to denote the tensor product as below: $A\otimes B \in \RR^{{d_1} \times \cdots \times d_{m}\times p_{1} \times \cdots \times p_{n} }$ with
    $$(A \otimes B)_{i_1, \dots, i_m, j_1, \dots, j_n} = A_{i_1, \dots, i_m} \cdot B_{j_1, \dots, j_n}.$$
    For example, suppose 
    $A= 
    \begin{pmatrix}
        1 & 2 \\ 3 & 4
    \end{pmatrix} \in \RR^{2\times 2}
    , B^{\tr}= [0,1,2]\in \RR^{1\times 3}$. 
    Then, $A \otimes B \in \mathbb{R}^{2 \times 2 \times 3}$ and 
    \[
    (A \otimes B)_{1,:,:} = 
    \begin{bmatrix}
    0 & 1 & 2 \\
    0 & 2 & 4 
    \end{bmatrix}
    \in \mathbb{R}^{2 \times 3}.
    \]
\end{itemize}
\section{Policy Gradient Framework}
        \label{Policy gradient framework}
        A Markov decision process (MDP) can be formally defined as a mathematical framework characterized by six fundamental components: $(\sS, \sA, p, r, \gamma, \rho)$. Here, $\sS$ corresponds to the set of all possible states in the system, while $\sA$ represents the collection of available actions. The transition dynamics are governed by the probability function $p(s'|s, a)$, which determines the likelihood of moving from state $s$ to state $s'$ when action $a$ is executed by the decision-making agent. The immediate consequence of taking action a in state $s$ is quantified through the reward function $r(s, a)$. Two additional parameters complete the specification: $\gamma\in[0, 1)$ serves as a discount factor that weights future rewards, and $\rho$ describes the initial state distribution. Within this framework, actions are selected based on a stochastic policy $\pi$, where $\pi(\cdot\mid s)$ denotes the probability distribution over actions in state $s$. For analytical convenience, we maintain the standard assumption that all actions are admissible in every state.

        Given a random initial state $s_0 \sim \rho$ and a policy $\pi$, we can generate a random trajectory: 
        $$\tau := (s_0,a_0,s_1,a_1,\cdots,s_{T-1},a_{T-1},s_T).$$
        Here, $T$ refers to the length of the trajectory, $s_T$ is called the terminal state. For different trajectory, $T$ can be different. Clearly, given $\rho$ and $\pi$, the probability of trajectory $\tau$ is 
        \begin{align}
           \Pr(\tau \mid \rho, \pi) := \rho(s_0)\prod_{h=0}^{T-1} \pi(a_h|s_h) p(s_{h+1}|s_h, a_h). 
        \end{align}
In addition, we denote $\tau^{[t]}:=\{s_0,a_0,\cdots,s_t\}$ as a sub-trajectory from $\tau$. Then, we have
        \begin{align}
           \Pr(\tau^{[t]} \mid \rho, \pi) := \rho(s_0)\prod_{h=0}^{t-1} \pi(a_h|s_h) p(s_{h+1}|s_h, a_h). 
        \end{align}

        To evaluate the total reward accumulated along a trajectory, we adopt the notion of return, defined as the cumulative discounted sum of rewards:
        \begin{align}
            G(\tau) := \sum_{h=0}^{{T}-1} \gamma^{h} r(s_h, a_h),
        \end{align}
        where $\gamma\in [0,1) $ is the discount factor. 
        Accordingly, we define our objective function $J(\pi)$ as:
        \begin{align}\label{J_pi}
        J(\pi)=\EE_{\tau\sim\Pr(\cdot\mid\rho,\pi)}G(\tau).
        \end{align}
        The central objective is to determine an optimal policy $\pi^*$ that maximizes the expected discounted return~(\ref{J_pi}) in a given environment.

        Since the policy space may be high-dimensional or even infinite, a standard approach is to assume that policies are governed by a finite set of parameters $\theta$. For brevity, we adopt the shorthand $\Pr(\tau|\theta)$ for $\Pr(\tau|\rho,\pi)$, and $J(\theta)$ for $J\left(\pi(\theta)\right)$.
        
        The conventional formulation of Markov decision problems is about maximizing the
        discounted total reward. In this paper, we adopt a minimization formulation in order to
        better align with conventions in the optimization literature. To this end, we regard each
        $r(s, a) \in [-R_{\max}, R_{\max}]$ as a value measuring cost rather than reward. Given a reward matrix $R$,
        we can reset $r(s, a) \leftarrow -r(s, a)$ for all $(s, a) \in S \times A$ to turn it into a cost matrix.        
        Our aim is to find a parameter to minimize the expected discounted cumulative cost.
        \begin{align}
         \theta^* \in \arg\min_{\theta \in \mathbb{R}^d} J(\theta)\label{J_vtheta}   
        \end{align}
        In this work, we introduce a novel second-order policy gradient algorithm to address Problem (\ref{J_vtheta}). The development of this approach requires explicit expressions for both the gradient and the Hessian matrix of the objective function. Building upon existing theoretical frameworks, we derive a new representation for the Hessian matrix, the details of which are presented in the following sections.

\section{Formulation of Our Algorithm}

        \subsection{A New Hessian Estimator}
        \label{sec:new_hessian_estimator}
        To design second-order algorithms, an essential task is to estimate the Hessian matrix.
        First, we define a variable for describing the policy gradient or Hessian in brevity. Given a trajectory $\tau$, define
        \begin{align}
            X(t;\tau) = \sum_{k=0}^{t}\log\pi(a_k|s_k), \quad 0\leq t \leq T-1.
        \end{align}
        Based on the policy gradient theorem, the gradient of $J(\theta)$ can be expressed as
        \begin{align}
        \label{eq:gradient-expression}
            \nabla J(\theta)&=\EE\sum_{k=0}^{\infty}
            \gamma^k r_k \nabla X(k;\tau).
        \end{align}
        Having established the gradient estimator, we now turn to the second-order properties of the objective function. We propose a new Hessian matrix expression that eliminates the dependence on the full trajectory length $T$. 
        \begin{theorem}\label{thm:hessian-ours}
        For any $\theta\in\RR^d$ and $J$ defined by (\ref{J_pi}), it holds that
        \begin{align}
        \nabla^2 J(\theta) = \EE\sum_{k=0}^{\infty} \gamma^k r_k \left[\nabla^2X(k;\tau)+\nabla X(k;\tau)\nabla X(k;\tau)^\tr \right].
        \end{align}
        \begin{proof}
            See \cref{sec:rl-derivatives}.
        \end{proof}
        \end{theorem}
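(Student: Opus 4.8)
The plan is to differentiate the gradient identity~\eqref{eq:gradient-expression} once more with respect to $\theta$. The delicate point is that the expectation there is taken over the $\theta$-dependent trajectory law $\Pr(\cdot\mid\theta)$ \emph{and} that the integrand $\Phi(\tau;\theta):=\sum_{k\ge 0}\gamma^k r_k\,\nabla X(k;\tau)$ itself depends on $\theta$ through $\pi$, so two contributions appear. Writing $\nabla J(\theta)=\int \Pr(\tau\mid\theta)\,\Phi(\tau;\theta)\,d\tau$, I would first justify exchanging $\nabla$ and $\int$ on a neighbourhood of $\theta$ (using $|r_k|\le R_{\max}$, $\gamma<1$, and the standing boundedness/Lipschitz assumptions on $\log\pi$ and its derivatives to dominate the series and its term-by-term derivatives), and then apply the product rule:
\[
\nabla^2 J(\theta)=\underbrace{\int \nabla\Pr(\tau\mid\theta)\,\Phi(\tau;\theta)^{\tr}\,d\tau}_{(\mathrm{I})}\;+\;\underbrace{\int \Pr(\tau\mid\theta)\,\nabla\Phi(\tau;\theta)\,d\tau}_{(\mathrm{II})}.
\]

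Term $(\mathrm{II})$ is the easy one: since $r_k=r(s_k,a_k)$ carries no $\theta$-dependence, $\nabla\Phi(\tau;\theta)=\sum_{k\ge 0}\gamma^k r_k\,\nabla^2 X(k;\tau)$, hence $(\mathrm{II})=\EE\sum_{k\ge 0}\gamma^k r_k\,\nabla^2 X(k;\tau)$, which is exactly the first family of terms in the claim. For term $(\mathrm{I})$ I would invoke the log-derivative identity $\nabla\Pr(\tau\mid\theta)=\Pr(\tau\mid\theta)\,\nabla\log\Pr(\tau\mid\theta)$ together with the fact that $\rho$ and the transition kernel $p$ do not depend on $\theta$, so that $\nabla\log\Pr(\tau\mid\theta)=\sum_{j=0}^{T-1}\nabla\log\pi(a_j\mid s_j)=\nabla X(T-1;\tau)$. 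This gives $(\mathrm{I})=\EE\,\nabla X(T-1;\tau)\,\Phi(\tau;\theta)^{\tr}=\EE\sum_{k\ge 0}\gamma^k r_k\,\nabla X(T-1;\tau)\,\nabla X(k;\tau)^{\tr}$.

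The heart of the argument is to collapse $\nabla X(T-1;\tau)$ down to $\nabla X(k;\tau)$ inside the $k$-th summand by a causality/tower-property step. Expanding $\nabla X(T-1;\tau)=\sum_{j=0}^{T-1}\nabla\log\pi(a_j\mid s_j)$, for each index $j>k$ I would condition on the $\sigma$-algebra generated by $(s_0,a_0,\dots,s_k,a_k,s_{k+1},\dots,s_j)$: both $r_k$ and $\nabla X(k;\tau)$ are measurable with respect to it, while $a_j$ is drawn from $\pi(\cdot\mid s_j)$ given that information, so $\EE[\nabla\log\pi(a_j\mid s_j)\mid s_j]=\int \pi(a\mid s_j)\,\nabla\log\pi(a\mid s_j)\,da=\nabla\!\int\pi(a\mid s_j)\,da=0$. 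Thus every cross term with $j>k$ vanishes, the terms with $j\le k$ reassemble into $\nabla X(k;\tau)$, and $(\mathrm{I})=\EE\sum_{k\ge 0}\gamma^k r_k\,\nabla X(k;\tau)\,\nabla X(k;\tau)^{\tr}$. Adding $(\mathrm{I})$ and $(\mathrm{II})$ yields \cref{thm:hessian-ours}.

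I expect the main obstacle to be the rigorous justification of differentiating under the integral sign when $\theta$ enters both the trajectory measure and the integrand at once, together with the bookkeeping for the random horizon $T$: the tails of the series $\sum_k\gamma^k r_k\nabla X(k;\tau)$, $\sum_k\gamma^k r_k\nabla^2 X(k;\tau)$ and $\sum_k\gamma^k r_k\nabla X(k;\tau)\nabla X(k;\tau)^{\tr}$ must be controlled uniformly in a neighbourhood of $\theta$, which is where $\gamma<1$ combined with bounds on $|r_k|$, $\|\nabla\log\pi\|$ and $\|\nabla^2\log\pi\|$ is essential; once the conditioning $\sigma$-algebras are chosen so that $r_k$ and $\nabla X(k;\tau)$ are measurable while the future action $a_j$ is not, the causality step itself is routine.
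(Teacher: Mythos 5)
Your argument is correct, but it follows a different route from the paper's. The paper never lets the full-trajectory score appear: it exchanges $\nabla$, the expectation, and the sum over $k$, and then applies \cref{lem:diff-expectation} \emph{per term}, observing that $\gamma^k r_k$ (resp.\ $\gamma^k r_k \nabla X(k;\tau)$) is measurable with respect to the prefix of $\tau$ up to step $k$, so the score function that enters is the prefix score $\nabla X(k;\tau)$ directly; the product $\nabla X(k;\tau)\nabla X(k;\tau)^{\tr}$ then falls out in one step. You instead differentiate the whole expectation in \eqref{eq:gradient-expression} with the full-trajectory log-likelihood, which first reproduces the conventional expression \eqref{old-hessian-esti} with $\nabla X(T-1;\tau)$, and then you collapse it to $\nabla X(k;\tau)$ by conditioning on the history up to $s_j$ and using $\EE[\nabla\log\pi(a_j\mid s_j)\mid s_j]=0$ for $j>k$. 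The two are equivalent — your tower-property step is exactly the causality fact that the paper buries in the prefix-measurability remark following \cref{lem:diff-expectation} — but they buy different things: your derivation makes explicit \emph{why} the new estimator improves on \eqref{old-hessian-esti} (the removed terms are precisely zero-mean cross terms), which is the motivation stated but not proved in \cref{sec:new_hessian_estimator}, whereas the paper's per-term route is shorter and sidesteps the random horizon $T$ and the full-trajectory likelihood altogether. Your conditioning step is sound even with random $T$, since the event that step $j$ exists is determined by the history up to $s_j$; and both your proof and the paper's share the same (only informally justified) interchange of differentiation, summation, and expectation, which your domination argument via $\gamma<1$, $|r_k|\le R_{\max}$, and \cref{a-1,a-2} would actually make more rigorous than the paper's presentation.
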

        This formulation differs from the conventional expression derived from Maniyar et al. \cite{pmlr-v238-maniyar24a}:
        \begin{align}\label{old-hessian-esti}
            &\nabla^2 J(\theta)\notag\\
            &=\EE\sum_{k=0}^{\infty}
            \gamma^k r_k \left[\nabla^2 X(k;\tau)+
            \nabla X(k;\tau)\nabla X(T-1;\tau)^\tr\right].
        \end{align}
        The old Hessian matrix estimator derived from Expression (\ref{old-hessian-esti}) contains $\nabla X(T-1;\tau)$, which involves the summation of $T$ terms and is a variable with an upper bound of $\compU{T}$. In contrast, our novel Hessian matrix expression eliminates $\nabla X(T-1;\tau)$. As a result, the derived Hessian estimator's upper bound becomes independent of $T$, enabling application to MDPs of arbitrary length. This formulation offers broader applicability and a more straightforward analytical approach. 

        By leveraging \cref{thm:hessian-ours}, we establish upper bounds on the spectral norm of the Hessian matrix (representing the gradient Lipschitz constant). Furthermore, this formulation can be extended to derive the Lipschitz constant for the Hessian matrix of the original function, as we will formally demonstrate in \cref{lemma-Jall}. We next leverage \cref{thm:hessian-ours} to design estimators for the gradient and Hessian matrix.
        
        In practice, two modifications are required. First, the maximum length $H$ of sampled trajectories must be constrained to prevent infinite loops. Additionally, since computing the exact expectation would require enumerating all possible trajectories $\tau \sim p(\tau \mid \theta)$, we cannot compute the full gradient directly. Therefore, We resort to an empirical estimate of the gradient and Hessian matrix by sampling a dataset $\sT$ consisting of $|\sT|$ truncated trajectories $\tau_i =\{s_{i,0},a_{i,0},\cdots,s_{i,H}\}(0\leq i\leq |\sT|-1)$
        
        Thus, we can obtain estimators for the gradient and the Hessian matrix. It is noteworthy that both gradient and Hessian matrix estimators are the truncated versions of unbiased gradient estimators, and they are unbiased estimators of the gradient of the truncated expected return:
        \begin{align}\label{J_trun_vtheta}
        J_H(\theta) := \mathbb{E}_{\tau\sim \Pr(\cdot|\theta)}\left[\sum_{h=0}^{H-1} \gamma^h r_h\right]  
        \end{align}.

        \begin{corollary}
        Assume the dataset $\sT$ consist of $|\sT|$ trajectories $\tau\sim \Pr(\cdot|\theta)$, let
        \begin{align}
        g(\theta\mid\sT) = \frac{1}{|\sT|} \sum_{i=0}^{|\sT|-1} \sum_{k=0}^{H-1} \gamma^k r_{i,k} \nabla X(\tau_i,k)    \label{esti-g},
        \end{align} 
        \begin{align}
        &H(\theta|\sT)\notag\\
        &= \frac{1}{|\sT|} \sum_{\tau\in\sT} \sum_{k=0}^{H-1} \gamma^k r_{i,k} \left[\nabla^2 X(\tau_i,k)+\nabla X(\tau_i,k)\nabla X(\tau_i,k)^\tr\right].    \label{esti-H}             
        \end{align}    
        Then the gradient and the Hessian matrix of the truncated objective function (\ref{J_trun_vtheta}) are give by
        \begin{align}
            \EE g(\theta|\sT)&=\nabla J_H(\theta),\\
            \EE H(\theta|\sT)&=\nabla^2 J_H(\theta).
        \end{align}
        \end{corollary}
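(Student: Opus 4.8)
The plan is to reduce the statement to the two single-trajectory identities
\begin{align*}
\EE_{\tau\sim\Pr(\cdot\mid\theta)}\Bigl[\sum_{k=0}^{H-1}\gamma^k r_k\,\nabla X(k;\tau)\Bigr] &= \nabla J_H(\theta),\\
\EE_{\tau\sim\Pr(\cdot\mid\theta)}\Bigl[\sum_{k=0}^{H-1}\gamma^k r_k\bigl(\nabla^2 X(k;\tau)+\nabla X(k;\tau)\nabla X(k;\tau)^\tr\bigr)\Bigr] &= \nabla^2 J_H(\theta).
\end{align*}
Indeed, the trajectories in $\sT$ are i.i.d.\ draws from $\Pr(\cdot\mid\theta)$ and $g(\theta\mid\sT)$, $H(\theta\mid\sT)$ are empirical averages of exactly these summands (with the evident identification $X(\tau_i,k)=X(k;\tau_i)$), so by linearity of expectation their means coincide with the per-trajectory means above. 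I would also record at the outset that for every $k\le H-1$ the summand depends on $\tau$ only through the prefix $(s_0,a_0,\dots,s_k,a_k)$; hence truncating sampled trajectories at length $H$ changes nothing, and the expectation under $\Pr(\cdot\mid\theta)$ equals the one under the length-$H$ sub-trajectory law $\Pr(\tau^{[H]}\mid\rho,\pi)$.

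For the first identity I would run the log-derivative (REINFORCE) argument directly on the finite sum defining $J_H$ in (\ref{J_trun_vtheta}): writing $J_H(\theta)=\int\Pr(\tau\mid\theta)\sum_{h=0}^{H-1}\gamma^h r_h\,d\tau$, differentiate under the integral (legitimate since the inner sum is finite and $r$ is bounded), use $\nabla\Pr(\tau\mid\theta)=\Pr(\tau\mid\theta)\sum_h\nabla\log\pi(a_h\mid s_h)$, and then cancel the cross terms pairing $\nabla\log\pi(a_h\mid s_h)$ with $\gamma^k r_k$ for $h>k$ via the zero-mean score identity $\EE[\nabla\log\pi(a_h\mid s_h)\mid s_h]=0$ together with the fact that $r_k=r(s_k,a_k)$ is measurable w.r.t.\ the prefix up to time $k<h$. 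What remains is $\EE\sum_{k=0}^{H-1}\gamma^k r_k\sum_{h=0}^{k}\nabla\log\pi(a_h\mid s_h)=\EE\sum_{k=0}^{H-1}\gamma^k r_k\,\nabla X(k;\tau)$, which is (\ref{esti-g}). This is merely the finite-horizon specialization of (\ref{eq:gradient-expression}), so one may equivalently invoke the derivation in \cref{sec:rl-derivatives} with $\sum_{k=0}^{\infty}$ replaced by $\sum_{k=0}^{H-1}$.

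For the second identity I would differentiate the gradient expression once more. Setting $u(\tau):=\sum_{k=0}^{H-1}\gamma^k r_k\,\nabla X(k;\tau)$, we have $\nabla^2 J_H(\theta)=\nabla_\theta\int\Pr(\tau\mid\theta)\,u(\tau)\,d\tau$, which the product rule splits into $\int\Pr(\tau\mid\theta)\,[\nabla_\theta\log\Pr(\tau\mid\theta)]\,u(\tau)^\tr\,d\tau$ and $\int\Pr(\tau\mid\theta)\,\nabla_\theta u(\tau)\,d\tau$. In the first integral the same causality/zero-mean-score cancellation removes the $h>k$ terms and leaves $\EE\sum_{k=0}^{H-1}\gamma^k r_k\,\nabla X(k;\tau)\nabla X(k;\tau)^\tr$; the second integral equals $\EE\sum_{k=0}^{H-1}\gamma^k r_k\,\nabla^2 X(k;\tau)$ since $\nabla_\theta$ acts entrywise on $\nabla X(k;\tau)=\sum_{j\le k}\nabla\log\pi(a_j\mid s_j)$ to give $\nabla^2 X(k;\tau)$. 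Adding the two pieces reproduces (\ref{esti-H}), i.e.\ the finite-horizon form of \cref{thm:hessian-ours}.

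The step I expect to be most delicate is the Hessian bookkeeping: justifying the interchange of $\nabla_\theta$ and the $\tau$-integral for the $\theta$-dependent integrand $\Pr(\tau\mid\theta)u(\tau)$, keeping the tensor/transpose order straight so that the rank-one term emerges as $\nabla X(k;\tau)\nabla X(k;\tau)^\tr$ rather than a mispaired variant, and—above all—verifying that every ``future-action'' cross term ($h>k$) vanishes, which is precisely where the horizon cutoff $H$ and the causal factorization of $\Pr(\tau\mid\theta)$ are used. The remaining passage to the empirical average over $\sT$ is routine linearity of expectation.
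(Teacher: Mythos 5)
Your proposal is correct and follows essentially the same route as the paper: the corollary is just the finite-horizon ($\sum_{k=0}^{H-1}$) specialization of the derivation in \cref{sec:rl-derivatives}, obtained by differentiating under the expectation termwise and using the fact that the $k$-th summand depends only on the trajectory prefix up to $(s_k,a_k)$ — which is exactly the zero-mean-score cancellation you spell out, and which the paper encodes in the prefix remark following \cref{lem:diff-expectation} — followed by linearity of expectation over the i.i.d.\ trajectories in $\sT$. The only difference is presentational: you make the cancellation of the $h>k$ cross terms and the harmlessness of truncating sampled trajectories at $H$ explicit, whereas the paper leaves these implicit.
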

        
        Note that due to the truncation caused by the maximum horizon $H$, both of these estimators are biased. We will analyze their bias and the requirements for the value of $H$ in the subsequent sections.      
        \subsection{Cubic-Regularized Newton Method}

Second-order optimization methods leverage curvature information to escape saddle points and improve convergence over first-order methods. A classical approach is Newton’s method, which is updated as follows when solving problem (\ref{J_vtheta}):
\[
\theta_{t+1} = \theta_t - H(\theta_t)^{\dagger} g(\theta_t),
\]
where $g(\theta_t)$ is a gradient estimator and $H(\theta_t)$ approximates the Hessian. However, in reinforcement learning, $H(\theta_t)$ is often indefinite or ill-conditioned, invalidating the usual convergence guarantees.
To address this, the \emph{cubic-regularized Newton method}\cite{nesterov2006cubic} introduces a third-order regularization term to ensure global convergence. At each step, we solve:
\begin{align}\label{align:cubic-subproblem}
   \vh_t \in \argmin_{\vh \in \RR^d} \left\{ g(\theta_t)^\tr \vh + \frac{1}{2} \vh^\tr H(\theta_t) \vh + \frac{M}{6} \|\vh\|^3 \right\}, 
\end{align}
where $\vh=\theta_{t+1}-\theta_{t}$ is the parameter update vector, $M > 0$ is the cubic regularization coefficient. 
Compared with Newton's method, this algorithm can achieve global convergence (even for non-convex problems) under certain parameter settings, and the number of iterations required to reach $\epsilon$-FOSP is $\compU{\epsilon^{-1.5}}$, Faster than most first-order stochastic algorithms.
The reason is that under standard smoothness assumptions, when $M \geq L_3$, the cubic model upper bounds the true objective:
\begin{equation}\label{eq:upper-bound}
J(\theta_t + \vh) \leq J(\theta_t) + \nabla J(\theta_t)^\tr \vh + \frac{1}{2} \vh^\tr \nabla^2 J(\theta_t) \vh + \frac{M}{6} \|\vh\|^3.
\end{equation}
This ensures that as long as the gradient and Hessian estimators are accurate enough, minimizing the cubic model will lead to a decrease in the true objective, thus guaranteeing the global convergence of the algorithm.

It is worth mentioning that question (\ref{align:cubic-subproblem}) is a well-defined problem. Methods for solving it include matrix decomposition\cite{nesterov2006cubic, doikov2023second}, gradient descent\cite{carmon2019gradient}, subspace decomposition\cite{cartis2011adaptive}, etc. 
\subsection{Variance Reduction with Hessian Correction}

Variance reduction techniques aim to stabilize stochastic estimates without requiring additional samples. Methods like SVRG\cite{johnson2013accelerating} and STORM\cite{cutkosky2019momentum} achieve this by correcting gradient drift using past iterates.
In reinforcement learning, however, a key challenge arises: the data distribution evolves with the policy $\pi_\theta$, introducing distribution shift. This makes variance reduction more delicate, especially for policy gradient methods. Prior works (e.g., \cite{papini2018stochastic}) address this with importance sampling, but norm of the resulting weights can grow exponentially with $\|\theta_t - \theta_{t-1}\|$, leading to unbounded variance.

To avoid this, we adopt a second-order correction scheme based on Hessian-vector products. By Taylor expansion,
\[
\nabla J(\theta_t) - \nabla J(\theta_{t-1}) = \left[ \int_0^1 \nabla^2 J(\theta^{(\alpha)}) \, d\alpha \right] (\theta_t - \theta_{t-1}),
\]
where $\theta^{(\alpha)} := \alpha \theta_t + (1-\alpha) \theta_{t-1}$. We estimate this integral using stochastic Hessian estimators:
\[
\vg'_{t,b} := H(\theta_{t,b}'; \sT_{t,b}') \cdot (\theta_t - \theta_{t-1}), 
\]
where $\theta_{t,b}'=\theta_{t} - \alpha_b(\theta_t - \theta_{t-1}), \alpha_b \sim U(0,1)$.
This avoids importance sampling entirely, retains unbiasedness, and allows us to bound the estimator norm under standard smoothness assumptions.

Moreover, since $\theta_t - \theta_{t-1}$ is fixed, we can compute the Hessian-vector product directly, reducing the space complexity from $\mathcal{O}(d^2)$ to $\mathcal{O}(d)$.

        \subsection{Algorithm}
    By integrating the above techniques, we arrive at our proposed algorithm: the Variance-Reduced Cubic-Regularized Policy Newton method (VR-CR-PN). 

The proposed algorithm follows a single-layer iterative structure. In each iteration, it first estimates the policy gradient, then estimates the Hessian, and finally constructs and solves a cubic-regularized subproblem for policy update.

The gradient estimation alternates periodically between two strategies: at the beginning of each cycle, a large-batch estimator is used to obtain a low-variance baseline; in the remaining iterations, a Hessian-aided gradient estimator is employed to reduce variance by incorporating curvature information.

Following the gradient estimation, a small-batch stochastic Hessian estimator is constructed. After obtaining both the gradient and Hessian information, we formulate a cubic-regularized third-order subproblem, whose solution determines the update direction and step size.

When the step size $\vh_{t}$  produced by the algorithm at a certain iteration is sufficiently small, it indicates that $\theta_{t+1}$ is a second-order stationary point of the objective function satisfying the desired criteria. The algorithm then returns $\theta_{t+1}$ and terminates. Moreover, the properties of the cubic-regularized Newton method guarantee that within a finite number of iterations 
$T$, the algorithm will output a $\theta$ meeting these conditions.

The complete procedure is described in Algorithm~\ref{alg:VR-CR-PN}, where the $g(\theta \mid \sT)$ and $H(\theta \mid \sT)$ are defined in Equations~\eqref{esti-g} and~\eqref{esti-H}.
        \begin{algorithm}[tb]
		\caption{VR-CR-PN}
		\label{alg:VR-CR-PN}
		\begin{algorithmic}[1]
			\State {\bfseries Input:} Total iterations $T$, inner gradient
			length $S$, batch size $b_g$, $b_H$, batch size constant $B_g$, cubic regularization coefficient $M$, accuracy $\epsilon$, initial parameter $\theta_0 \in \RR^d$.
			\For{$t = 0, 1, \dots, T-1$}
			\If{$t \mod S = 0$}
                \State Sample a dataset $\sT_g$ consisting of $b_g$ trajectories following policy $\vpi_t$
			\State Compute $\vg_t = g(\theta_t | \sT_g)$
			\Else
                \State Set $b_{g,t}'=\lceil B_g\norm{\vh_{t-1}}^2\rceil$
                \For{$b = 0, 1, \dots, b_{g,t}'-1$}
                \State Randomly take $\alpha_b \in [0,1]$, $\theta_{t,b}'=\theta_{t} - \alpha_b\vh_{t-1}$. 
                \State Sample a dataset $\sT_{g,b}'$ consisting of one trajectories following policy $\vpi_{\theta_{t,b}'}$   
			\State Compute $\vg_{t,b}' = H(\theta_{t,b}' | \sT_{g,b}') \vh_{t-1}$
                \EndFor\\
			 Compute $\vg_{t} = \vg_{t-1} + \frac{1}{b_{g,t}'}\sum_{b=0}^{b_{g,t}'-1} \vg_{t,b}'$
			\EndIf
                \State Sample a dataset $\sT_H$ consisting of $b_H$ trajectories following policy $\vpi_t$
			\State Compute $\mH_t = H(\theta_t | \sT_H)$
			\State Solve $\vh_t \in \argmin_{\vh\in \RR^d}\vg_t^\tr \vh + \frac{1}{2} \vh^\tr \mH_t \vh + \frac{M}{6} \norm{\vh}^3$
			\State Set $\theta_{t+1} = \theta_t + \vh_t$
			\If{$\norm{\vh_t}\leq \sqrt{\epsilon/4L_3}$}
                \State {\bfseries return $\theta_{t+1}$}
			\EndIf        
			\EndFor
                \State {\bfseries return $\theta_T$}
		\end{algorithmic}
	\end{algorithm}
\section{Convergence Results}

    We now present theoretical convergence guarantees for the proposed VR-CR-PN algorithm. Our main result establishes that VR-CR-PN converges to an $\epsilon$-second-order stationary point with a sample complexity of $\widetilde{O}(\epsilon^{-3})$, under general smoothness assumptions. This improves upon the prior best result $\widetilde{O}(\epsilon^{-3.5})$ obtained by Maniyar et al.~\cite{pmlr-v238-maniyar24a} under similar conditions.
    To formalize our convergence results, we begin by recalling the standard notion of approximate stationarity. 

\begin{definition}[$\epsilon$-First-Order Stationary Point]
\label{def:FOSP}
Given $\epsilon > 0$, $\theta_R$ is said to be an $\epsilon$-first-order stationary point if
\[
\|\nabla J(\theta_R)\| \leq \epsilon.
\]
\end{definition}

To distinguish local minima from saddle points, we additionally consider second-order information. A point is a second-order stationary point if the gradient vanishes and the Hessian is positive semi-definite. In finite-time stochastic analysis, we relax this to:

\begin{definition}[$\epsilon$-Second-Order Stationary Point]
\label{def:SOSP}
Given $\epsilon > 0$, $\theta_R$ is said to be an $\epsilon$-second-order stationary point if there exists a constant $c > 0$ such that
\begin{align*}
\|\nabla J(\theta_R)\| \leq \epsilon, \quad
\lambda_{\min}(\nabla^2 J(\theta_R)) \geq -c \sqrt{\epsilon}.  
\end{align*}
\end{definition}
    In the stochastic optimization setting, we adopt high-probability analogues of these definitions, following standard practice in the literature~\cite{ghadimi2013stochastic}. 
        To derive explicit bounds for the policy gradient and the Hessian matrix, certain regularity assumptions are required on the Markov Decision Process (MDP) and the smoothness of the parameterized policy $\pi(\theta)$. Throughout, we use the $l_2$ norm for vectors and the spectral norm for matrices.

        \begin{assumption}(Bounded rewards)\label{a-0}
        \begin{align}
            \forall (s, a) \in \sS \times \sA, |r(s,a)|\leq R_{\max}.  
        \end{align}
        \end{assumption}
        
        \begin{assumption}(Bounded log-policy gradient)\label{a-1}
        \begin{align}
                \forall (s, a) \in \sS \times \sA, \norm{\nabla_\theta\log\pi(a|s)}\leq G_1.
        \end{align}
        \end{assumption}
        
        \begin{assumption}(Continuous log-policy gradient)\label{a-2}
        \begin{align}
            \forall (s, a) \in \sS \times \sA, \norm{\nabla_\theta^2\log\pi(a|s)}\leq G_2.
        \end{align}
        \end{assumption}
        
        \begin{assumption}(Smooth log-policy gradient)\label{a-3}
        \begin{align}
            \forall (s, a) \in \sS \times \sA, \forall \vy\in\RR^d, \norm{\nabla_\theta^3\log\pi(a|s)\vy}\leq G_3\norm{\vy}.
        \end{align}
        \end{assumption}
        \cref{a-0,a-1,a-2,a-3} are widely adopted in the study of policy gradient and actor-critic methods; see, e.g., \cite{shen2019hessian}. When using a Boltzmann policy parameterized linearly in features, these conditions are typically straightforward to verify (see \cref{app:loglinear-regularity}). Moreover, if the policy parameters are constrained within a compact domain, then by the continuity of the gradient and Hessian of the log-policy, Assumptions \ref{a-0} and \ref{a-1} follow directly.
        
        We note that Assumption \ref{a-3} is nearly equivalent to (yet slightly stronger than) the conventional Lipschitz Hessian assumption, with the only additional requirement being the existence of the $\nabla^3 \log\pi(\theta)$. This purpose-built assumption serves to streamline the proof procedure. Common policy parameterization methods, such as softmax and log-linear transformations, all satisfy \cref{a-0,a-1,a-2,a-3} (see Appendix \ref{app:loglinear-regularity} for details). 
        It should be emphasized that the classical Lipschitz Hessian assumption remains sufficient to establish \cref{thm:main}. The proof methodology would closely follow that presented in \cref{sec:rl-derivatives}, with the only modification being the replacement of the conclusion in Lemma \ref{lemma-J3} with the Lipschitz continuity of $\nabla^2 J(\theta)$.
        \begin{lemma}\label{lemma-Jall}
            Under Assumptions \ref{a-0}, \ref{a-1}, \ref{a-2} and \ref{a-3}, for any $\theta, \vy\in \RR^d$ and trajectories set $\sT$, we have
        \begin{align}
            & |J(\theta)|\leq L_0,\label{lemma-J0}\\
            & \norm{\nabla J(\theta)} \leq L_1,\quad \norm{\vg(\sT|\theta)} \leq L_1,\label{lemma-J1}\\
            & \norm{\nabla^2 J(\theta)} \leq L_2,\quad \norm{\mH(\sT|\theta)} \leq L_2,\label{lemma-J2}\\
            & \norm{\nabla^3 J(\theta)\vy}\leq L_3 \norm{\vy}.\label{lemma-J3}
        \end{align}
        where 
        \begin{align*}
            L_0&:=\frac{R_{\max}}{1-\gamma},\quad
            L_1:=\frac{G_1 R_{\max}}{(1-\gamma)^{2}},\\
            L_2&:=\frac{G_2R_{\max}}{(1-\gamma)^2}
            +\frac{2G_1^2R_{\max}}{(1-\gamma)^3},\\
            L_3&:=\frac{G_3R_{\max}}{(1-\gamma)^2}
            +\frac{(6G_2G_1+2G_1^3)R_{\max}}{(1-\gamma)^3}.
        \end{align*}
        \end{lemma}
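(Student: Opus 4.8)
The plan is to write each of the four quantities as a series over the trajectory time index and bound it term by term, using \cref{a-0,a-1,a-2,a-3} together with the elementary identities $\sum_{k\ge 0}(k+1)\gamma^k=(1-\gamma)^{-2}$ and $\sum_{k\ge 0}(k+1)^2\gamma^k=(1+\gamma)(1-\gamma)^{-3}\le 2(1-\gamma)^{-3}$ (plus, if the third-order term needs it, $\sum_{k\ge 0}(k+1)^3\gamma^k=\mathcal{O}((1-\gamma)^{-4})$). The one building block, used everywhere, is that since $X(k;\tau)=\sum_{j=0}^{k}\log\pi(a_j|s_j)$, the triangle inequality with \cref{a-1,a-2,a-3} gives, for every trajectory $\tau$, every $k$, and every $\vy\in\RR^d$,
\[
\norm{\nabla X(k;\tau)}\le (k+1)G_1,\quad \norm{\nabla^2 X(k;\tau)}\le (k+1)G_2,\quad \norm{\nabla^3 X(k;\tau)\vy}\le (k+1)G_3\norm{\vy}.
\]

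With this in hand the first three bounds are routine. For \eqref{lemma-J0}, $|G(\tau)|\le\sum_{h\ge 0}\gamma^h R_{\max}=R_{\max}/(1-\gamma)$ for every $\tau$, so Jensen's inequality gives $|J(\theta)|=|\EE\,G(\tau)|\le L_0$. For \eqref{lemma-J1}, substitute $\norm{\nabla X(k;\tau)}\le(k+1)G_1$ and $|r_k|\le R_{\max}$ into the policy-gradient formula \eqref{eq:gradient-expression}, pull the norm through the expectation and the sum, and apply the first identity to get $\norm{\nabla J(\theta)}\le R_{\max}G_1\sum_{k\ge 0}(k+1)\gamma^k=L_1$; since $g(\theta\mid\sT)$ in \eqref{esti-g} is a $\tau$-average of the same expression truncated at $k=H-1$ and every summand is dominated by the full series, $\norm{g(\theta\mid\sT)}\le L_1$ too. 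For \eqref{lemma-J2}, start from the closed form in \cref{thm:hessian-ours}, bound $\norm{\nabla^2 X(k;\tau)}\le(k+1)G_2$ and $\norm{\nabla X(k;\tau)\nabla X(k;\tau)^\tr}=\norm{\nabla X(k;\tau)}^2\le(k+1)^2G_1^2$, and use the first two identities to obtain $\norm{\nabla^2 J(\theta)}\le R_{\max}G_2(1-\gamma)^{-2}+2R_{\max}G_1^2(1-\gamma)^{-3}=L_2$; the identical termwise estimate applied to \eqref{esti-H} gives $\norm{\mH(\theta\mid\sT)}\le L_2$.

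The remaining bound \eqref{lemma-J3} is the one that needs real work, and the supporting closed-form expression for $\nabla^3 J(\theta)$ is derived in \cref{sec:rl-derivatives}. The idea is to differentiate the Hessian formula of \cref{thm:hessian-ours} once more under the expectation: this produces the direct term $\nabla^3 X(k;\tau)$, the product-rule terms of $\nabla_\theta\bigl(\nabla X(k;\tau)\nabla X(k;\tau)^\tr\bigr)$, and a likelihood-ratio term in which — exactly as in the proof of \cref{thm:hessian-ours} — the log-policy scores at times $h>k$ have zero conditional mean and drop out, leaving a contraction against the factors already present. Bounding each surviving tensor with the building block above, contracting with $\vy$, multiplying by $\gamma^k|r_k|\le\gamma^k R_{\max}$, and summing via the identities then yields $\norm{\nabla^3 J(\theta)\vy}\le L_3\norm{\vy}$, with $L_3$ a finite sum of terms of the form $R_{\max}\,G_{j_1}\cdots G_{j_p}\,(1-\gamma)^{-q}$.

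I expect the main obstacle to be \eqref{lemma-J3}: carefully bookkeeping the tensor product and the symmetrizations in $\nabla_\theta(\nabla X\nabla X^\tr)$, and tracking precisely which pieces survive the causality/zero-conditional-mean reduction, so that the constants in $L_2$ and especially $L_3$ come out exactly as claimed; everything else reduces to the triangle inequality and the geometric-type sums. Finally, if one assumes only that $\nabla^2 J$ is Lipschitz rather than the slightly stronger \cref{a-3}, this step is replaced by directly quoting that Lipschitz constant in place of $L_3$, as remarked after the assumptions, and the other three bounds are unchanged.
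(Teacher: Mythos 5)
Your plan for \eqref{lemma-J0}--\eqref{lemma-J2} is exactly the paper's argument (termwise triangle inequality with $\norm{\nabla X(k;\tau)}\leq (k+1)G_1$, $\norm{\nabla^2 X(k;\tau)}\leq (k+1)G_2$, then the series $\sum_k (k+1)\gamma^k$ and $\sum_k(k+1)^2\gamma^k$), so there is nothing to add there. The gap is in \eqref{lemma-J3}. Differentiating the formula of \cref{thm:hessian-ours} does produce the five terms you anticipate, and the ``scores at times $h>k$ have zero conditional mean'' cancellation you invoke is indeed what keeps the score factor at $\nabla X(k;\tau)$ rather than $\nabla X(T-1;\tau)$ --- but that cancellation only shapes the formula; it does not tame the cubic term $(\vy^\tr\nabla X(k;\tau))\,\nabla X(k;\tau)\nabla X(k;\tau)^\tr$. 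Your stated toolkit (bound each surviving tensor per trajectory by the building blocks, then sum) gives this term the bound $(k+1)^3G_1^3$, and your own fallback $\sum_k(k+1)^3\gamma^k=\mathcal{O}((1-\gamma)^{-4})$ then yields a constant of order $G_1^3R_{\max}(1-\gamma)^{-4}$ --- not the $2G_1^3R_{\max}(1-\gamma)^{-3}$ appearing in the claimed $L_3$. So as written you prove a weaker lemma with a different $L_3$, which matters because $L_3$ enters $M$, $S$, $b_H$, $T$ and the constants of \cref{thm:main}.

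The missing idea (the paper's ``simplification via expectation'' in \cref{app:third-derivative}) is a second, within-window cancellation: for $i<j\leq k$, $\EE_{a_j\sim\pi(\cdot\mid s_j)}\nabla\log\pi(a_j\mid s_j)=0$, so all cross terms in $\EE\bigl[\nabla X(k;\tau)\nabla X(k;\tau)^\tr\bigr]$ vanish and $\EE\bigl[\nabla X(k;\tau)\nabla X(k;\tau)^\tr\bigr]\preceq (k+1)G_1^2\,\mI$. Since $\nabla X(k;\tau)\nabla X(k;\tau)^\tr\succeq 0$ and $|\vy^\tr\nabla X(k;\tau)|\leq (k+1)G_1\norm{\vy}$, one has $-\,(k+1)G_1\norm{\vy}\,\nabla X\nabla X^\tr\preceq(\vy^\tr\nabla X)\nabla X\nabla X^\tr\preceq (k+1)G_1\norm{\vy}\,\nabla X\nabla X^\tr$, and taking expectations gives the bound $(k+1)^2G_1^3\norm{\vy}$ for the cubic term \emph{in expectation}, i.e.\ one order in $k$ better than the per-trajectory bound. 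Summing $\gamma^k R_{\max}\bigl[(k+1)G_3+3(k+1)^2G_1G_2+(k+1)^2G_1^3\bigr]$ then reproduces $L_3$ exactly. In short: the norm must be taken after the expectation for this one term, not before; with that step inserted, the rest of your outline goes through.
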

        \begin{proof}
            See Appendix \ref{app:grad-hess-bounds}, \ref{app:third-derivative}.
        \end{proof} 
\begin{theorem}
\label{thm:main}
Let $\theta_{\text{out}}$ be the output of the VR-CR-PN algorithm. Suppose \cref{a-0,a-1,a-2,a-3} hold. Set:
\begin{itemize}
    \item Cubic regularization coefficient: $M = 30L_3$,
    \item Total number of iterations: $T \geq 7L_0 L_3^{1/2} \epsilon^{-3/2}$,
    \item Inner loop length: 
    $S = L_1L_2^{-1}L_3^{1/2}\epsilon^{-1/2}$,
    \item Batch sizes:
    \begin{align*}
        b_g &= 2592 L_1^2 \left[\log\left(\tfrac{3T}{P}\right)\right]^2 \epsilon^{-2}, \\
        b_H &= 144 L_2^2 L_3^{-1} \log\left(\tfrac{3Td}{P}\right) \epsilon^{-1}, \\
        B_g &= 2592 L_2^2 \left[\log\left(\tfrac{3T}{P}\right)\right]^2 S \epsilon^{-2},
    \end{align*}
    \item Truncation horizon $H$ chosen according to \cref{lem:truncation-bias}.
\end{itemize}

Then, with probability at least $1 - P$, the output $\theta_{\text{out}}$ is an $6\epsilon$-second-order stationary point, i.e.,
\[
\| \nabla J(\theta_{\text{out}}) \| \leq 6\epsilon, \quad
\lambda_{\min} \left( \nabla^2 J(\theta_{\text{out}}) \right) \geq -9\sqrt{L_3 \epsilon}.
\]
\end{theorem}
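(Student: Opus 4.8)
The plan is to follow the standard cubic-regularized Newton analysis (as in Nesterov--Polyak \cite{nesterov2006cubic} and the RL adaptation of Maniyar et al. \cite{pmlr-v238-maniyar24a}), but with the stochastic gradient/Hessian errors controlled by the variance-reduced estimators rather than by plain mini-batching. First I would isolate the two sources of inexactness at iteration $t$: the gradient error $e_t := \vg_t - \nabla J_H(\theta_t)$ and the Hessian error $E_t := \mH_t - \nabla^2 J_H(\theta_t)$, plus the truncation biases $\nabla J_H - \nabla J$ and $\nabla^2 J_H - \nabla^2 J$ which \cref{lem:truncation-bias} makes $O(\epsilon)$-small by the choice of $H$. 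The target is to show that, with the stated batch sizes, a union bound over the at most $T$ iterations gives, with probability $\ge 1-P$, that $\|e_t\|\le c_1\epsilon$ for all $t$ and $\|E_t\|\le c_2\sqrt{L_3\epsilon}$ for all $t$. For the ``snapshot'' iterations ($t\equiv 0 \bmod S$) this is a direct vector/matrix Bernstein (or Hoeffding) bound using $\norm{\vg(\sT|\theta)}\le L_1$, $\norm{\mH(\sT|\theta)}\le L_2$ from \cref{lemma-Jall} and the batch sizes $b_g,b_H$; the $[\log(3T/P)]^2$ and $\log(3Td/P)$ factors are exactly what a matrix-Bernstein tail with failure probability $P/(3T)$ (resp.\ $P/(3Td)$) produces.

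The crux is the error recursion on the Hessian-aided (variance-reduced) gradient steps. On a non-snapshot iteration, $\vg_t = \vg_{t-1} + \frac{1}{b'_{g,t}}\sum_b H(\theta'_{t,b}|\sT'_{g,b})\vh_{t-1}$, and by the Taylor identity $\nabla J(\theta_t)-\nabla J(\theta_{t-1}) = \big[\int_0^1 \nabla^2 J(\theta^{(\alpha)})\,d\alpha\big]\vh_{t-1}$ together with $\EE_{\alpha,\sT'}[H(\theta'_{t,b}|\sT'_{g,b})\vh_{t-1}] = \int_0^1\nabla^2 J_H(\theta^{(\alpha)})d\alpha\,\vh_{t-1}$, the per-step increment is conditionally unbiased for $\nabla J_H(\theta_t)-\nabla J_H(\theta_{t-1})$. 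Hence $e_t = e_{t-1} + \xi_t$ where $\xi_t$ is a mean-zero term whose conditional variance is bounded: each summand $H(\theta'_{t,b}|\sT'_{g,b})\vh_{t-1}$ has norm $\le L_2\|\vh_{t-1}\|$, so averaging over $b'_{g,t} = \lceil B_g\|\vh_{t-1}\|^2\rceil$ i.i.d.\ samples gives $\operatorname{Var}[\xi_t\mid\mathcal F_{t-1}] \lesssim L_2^2\|\vh_{t-1}\|^2 / (B_g\|\vh_{t-1}\|^2) = L_2^2/B_g$, \emph{independent of $\|\vh_{t-1}\|$}. Summing over the at most $S$ steps of an inner cycle, a martingale concentration (Azuma/Freedman) bound on $\|e_t\| = \|\sum \xi_j\| \le \sqrt{S}\cdot O(L_2/\sqrt{B_g})\cdot\log(3T/P)$, and substituting $B_g = 2592 L_2^2[\log(3T/P)]^2 S\epsilon^{-2}$ collapses this to $O(\epsilon)$. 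I would do this carefully with a vector-Azuma inequality (or a dimension-free Hilbert-space martingale tail) so no $d$ appears in $b_g$ or $B_g$; the snapshot $\|e_{t_0}\|\le O(\epsilon)$ provides the base case for the cycle.

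With the uniform error bounds in hand, the deterministic part is the classical cubic-Newton descent argument. From the cubic model optimality of $\vh_t$ one gets (i) a sufficient-decrease inequality of the form $J(\theta_{t+1}) \le J(\theta_t) - c_3 M\|\vh_t\|^3 + (\text{error terms in }\|e_t\|,\|E_t\|)$, using the model upper bound \eqref{eq:upper-bound} valid since $M = 30L_3 \ge L_3$; and (ii) the first/second-order optimality certificates for the model, $\|\nabla m_t(\vh_t)\| = 0$ and $\nabla^2 m_t(\vh_t) \succeq 0$, which translate via $\|\nabla J(\theta_{t+1})\| \le \|e_t\| + \frac{M}{2}\|\vh_t\|^2 + \frac{L_3}{2}\|\vh_t\|^2 + (\text{Hessian error})\cdot\|\vh_t\|$ and $\lambda_{\min}(\nabla^2 J(\theta_{t+1})) \ge -\|E_t\| - (M + L_3)\|\vh_t\|$ (up to truncation bias) into bounds on the true gradient and Hessian at $\theta_{t+1}$. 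Telescoping (i) over $t$ and using $|J|\le L_0$ from \eqref{lemma-J0} forces $\min_t\|\vh_t\|^3 \lesssim L_0/(MT)$, so with $T \ge 7L_0L_3^{1/2}\epsilon^{-3/2}$ some iterate has $\|\vh_t\| \le \sqrt{\epsilon/4L_3}$ and the termination test fires; plugging $\|\vh_t\|^2 \le \epsilon/(4L_3)$, $\|e_t\| = O(\epsilon)$, $\|E_t\| = O(\sqrt{L_3\epsilon})$ into (ii) yields $\|\nabla J(\theta_{\mathrm{out}})\|\le 6\epsilon$ and $\lambda_{\min}(\nabla^2 J(\theta_{\mathrm{out}}))\ge -9\sqrt{L_3\epsilon}$ after absorbing the explicit constants (which is where $M = 30L_3$ and the $2592$, $144$ constants are tuned).

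The main obstacle is the second paragraph: proving that the accumulated variance-reduced gradient error stays $O(\epsilon)$ \emph{uniformly over a whole inner cycle of length $S$} without the error blowing up with $\sum_j\|\vh_j\|^2$, and without picking up a dimension factor. The key structural facts that make it work are that the per-step noise variance is $O(L_2^2/B_g)$ independent of the step length (because $b'_{g,t}$ scales as $\|\vh_{t-1}\|^2$), and that $\sum_{j \text{ in cycle}} 1 \le S$; these, together with the relation $S = L_1 L_2^{-1} L_3^{1/2}\epsilon^{-1/2}$ and $B_g \propto L_2^2 S \epsilon^{-2}$, are precisely balanced so that $\sqrt{S}\cdot L_2/\sqrt{B_g} \propto \epsilon$. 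I would also need \cref{thm:hessian-ours}'s horizon-independent Hessian bound so that $L_2$ (hence all batch sizes) does not depend on $H$, and \cref{lem:truncation-bias} to pick $H$ making the truncation biases negligible relative to $\epsilon$; I'd fold those in as black boxes.
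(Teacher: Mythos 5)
Your proposal is correct and follows essentially the same route as the paper: a vector/matrix Azuma-type concentration for the snapshot gradient, the Hessian batch, and the within-cycle martingale of Hessian-vector correction terms (exploiting that $b'_{g,t}\propto\|\vh_{t-1}\|^2$ makes the per-step variance $O(L_2^2/B_g)$ independent of the step length), combined with the truncation-bias lemma and then the classical cubic-regularization descent/telescoping argument plus the subproblem optimality certificates for the small-step iterate. The only minor nuance is that to land exactly on the constant $9\sqrt{L_3\epsilon}$ one should use the global-optimality condition $\mH_t + \frac{M}{2}\|\vh_t\|\mI \succeq 0$ (giving a $(\frac{M}{2}+L_3)\|\vh_t\|$ term) rather than the looser $(M+L_3)\|\vh_t\|$ bound you wrote, which is just the constant tuning you already deferred.
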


\cref{thm:main} establishes that VR-CR-PN converges to an $\epsilon$-second-order stationary point with high probability under mild smoothness assumptions. 

        \begin{corollary}[Total Sample Complexity]
        \label{cor:total-sample}
        Under the parameter setting specified in Theorem~\ref{thm:main}, the VR-CR-PN algorithm achieves $\epsilon$-second-order stationarity with high probability using the following total number of samples:
        \[
        \text{Gradient: } \quad \tilde{\mathcal{O}}(\epsilon^{-3}), \qquad
        \text{Hessian: } \quad \tilde{\mathcal{O}}(\epsilon^{-3}).
        \]
        \end{corollary}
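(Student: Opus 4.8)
The plan is to count the number of trajectories the algorithm samples per outer iteration and sum over the at most $T=\tilde{\mathcal{O}}(\epsilon^{-3/2})$ iterations, splitting the bookkeeping into three parts: the large-batch gradient estimates at the checkpoint steps $t\equiv 0\pmod S$, the Hessian-aided variance-reduced gradient estimates at the remaining steps, and the small-batch Hessian estimates formed at every step. Plugging the parameter choices of \cref{thm:main} into the $\tilde{\mathcal{O}}$ bookkeeping, we have $S=\tilde{\mathcal{O}}(\epsilon^{-1/2})$, $b_g=\tilde{\mathcal{O}}(\epsilon^{-2})$, $b_H=\tilde{\mathcal{O}}(\epsilon^{-1})$, and $B_g=\tilde{\mathcal{O}}(S\epsilon^{-2})=\tilde{\mathcal{O}}(\epsilon^{-5/2})$.

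For the checkpoint gradients: there are $\lceil T/S\rceil$ checkpoint iterations, and $T/S=7L_0L_2L_1^{-1}\epsilon^{-1}=\mathcal{O}(\epsilon^{-1})$, each drawing $b_g$ trajectories, for a subtotal of $\lceil T/S\rceil\,b_g=\tilde{\mathcal{O}}(\epsilon^{-1})\cdot\tilde{\mathcal{O}}(\epsilon^{-2})=\tilde{\mathcal{O}}(\epsilon^{-3})$. For the per-step Hessian estimates: every one of the $\le T$ iterations draws $b_H$ trajectories for $\mH_t$, so the subtotal is $T\,b_H=\tilde{\mathcal{O}}(\epsilon^{-3/2})\cdot\tilde{\mathcal{O}}(\epsilon^{-1})=\tilde{\mathcal{O}}(\epsilon^{-5/2})$, which is dominated by $\tilde{\mathcal{O}}(\epsilon^{-3})$.

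The main work is the variance-reduction part. At a non-checkpoint iteration $t$ the algorithm draws $b'_{g,t}=\lceil B_g\|\vh_{t-1}\|^2\rceil\le B_g\|\vh_{t-1}\|^2+1$ trajectories, each of which is used for exactly one Hessian-vector product $\vg'_{t,b}$; thus this part contributes the same count $\sum_t b'_{g,t}\le B_g\sum_t\|\vh_{t-1}\|^2+T$ to both the gradient tally and the Hessian tally. To control $\sum_t\|\vh_{t-1}\|^2$ I would condition on the high-probability event of \cref{thm:main} on which all gradient and Hessian estimators meet their accuracy targets; on that event the per-step descent inequality established in the proof of \cref{thm:main} (a consequence of the cubic upper bound (\ref{eq:upper-bound}) with $M=30L_3\ge L_3$) telescopes against $|J|\le L_0$ to give $\sum_t\|\vh_{t-1}\|^3=\mathcal{O}(L_0/M)=\mathcal{O}(L_0/L_3)$. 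Since the termination test has not fired at any such $t$, we have $\|\vh_{t-1}\|\ge\sqrt{\epsilon/(4L_3)}$, hence $\|\vh_{t-1}\|^2\le\sqrt{4L_3/\epsilon}\,\|\vh_{t-1}\|^3$, and therefore $\sum_t\|\vh_{t-1}\|^2\le\sqrt{4L_3/\epsilon}\cdot\mathcal{O}(L_0/L_3)=\mathcal{O}\!\big(L_0/\sqrt{L_3\epsilon}\big)=\tilde{\mathcal{O}}(\epsilon^{-1/2})$. Combined with $B_g=\tilde{\mathcal{O}}(\epsilon^{-5/2})$ and $T=\tilde{\mathcal{O}}(\epsilon^{-3/2})$, the variance-reduction part costs $\tilde{\mathcal{O}}(\epsilon^{-5/2})\cdot\tilde{\mathcal{O}}(\epsilon^{-1/2})+\tilde{\mathcal{O}}(\epsilon^{-3/2})=\tilde{\mathcal{O}}(\epsilon^{-3})$.

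Summing the parts gives $\tilde{\mathcal{O}}(\epsilon^{-3})$ gradient trajectories and $\tilde{\mathcal{O}}(\epsilon^{-5/2})+\tilde{\mathcal{O}}(\epsilon^{-3})=\tilde{\mathcal{O}}(\epsilon^{-3})$ Hessian trajectories; since \cref{lem:truncation-bias} requires only $H=\tilde{\mathcal{O}}(1)$ (logarithmic in $\epsilon^{-1}$ via the $\tfrac{1}{1-\gamma}\log(1/\epsilon)$-type bound), reporting state--action samples instead of trajectories introduces at most a further logarithmic factor absorbed into $\tilde{\mathcal{O}}$. I expect the single delicate step to be the passage from $\sum_t\|\vh_{t-1}\|^3=\mathcal{O}(L_0/L_3)$ to $\sum_t\|\vh_{t-1}\|^2=\tilde{\mathcal{O}}(\epsilon^{-1/2})$: it relies on invoking the non-termination bound $\|\vh_{t-1}\|\ge\sqrt{\epsilon/(4L_3)}$ for every iteration being counted, and on checking that the cube-sum bound genuinely holds on the same high-probability event, with constants consistent with those fixed in \cref{thm:main}, so that the extra factor $S$ built into $B_g$ is exactly what is needed to make the product $B_g\sum_t\|\vh_{t-1}\|^2$ land at $\tilde{\mathcal{O}}(\epsilon^{-3})$ rather than higher.
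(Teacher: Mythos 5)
Your proposal is correct and follows the same overall bookkeeping as the paper: checkpoint gradients cost $\lceil T/S\rceil b_g=\tilde{\mathcal{O}}(\epsilon^{-3})$, the per-step Hessian batches cost $T b_H=\tilde{\mathcal{O}}(\epsilon^{-5/2})$, and the variance-reduction trajectories cost $B_g\sum_t\norm{\vh_{t-1}}^2+T$, with $\sum_t\norm{\vh_t}^3=\mathcal{O}(1)$ extracted from the descent analysis exactly as in the paper. The one place you diverge is the passage from the cube sum to $\sum_t\norm{\vh_{t-1}}^2=\tilde{\mathcal{O}}(\epsilon^{-1/2})$: the paper gets it from the H\"older-type inequality $\bigl(\sum_t\norm{\vh_t}^2\bigr)^3\le\bigl(\sum_t\norm{\vh_t}^3\bigr)^2\,T$ together with $T=\mathcal{O}(\epsilon^{-3/2})$, whereas you invoke the non-termination condition $\norm{\vh_{t-1}}>\sqrt{\epsilon/4L_3}$ at every iteration that still draws samples, so that $\norm{\vh_{t-1}}^2\le\sqrt{4L_3/\epsilon}\,\norm{\vh_{t-1}}^3$. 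Both mechanisms are valid and land on the same bound; yours leans on the algorithm's stopping rule (and so applies only to iterations actually executed, which is all you need), while the paper's argument is purely analytic and does not reference the termination test. Your additional bookkeeping choices (counting each Hessian-vector-product trajectory toward both tallies, and noting $H=\tilde{\mathcal{O}}(1)$ per trajectory) are harmless refinements that do not change the $\tilde{\mathcal{O}}(\epsilon^{-3})$ conclusion.
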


        Here, the notation $\tilde{\mathcal{O}}$ hides polynomial dependencies on $\log(1/\epsilon)$ and $\log d$. These bounds follow from the analysis in \cref{proof:cor-total-sample}.
        \textbf{\cref{cor:total-sample} shows that VR-CR-PN achieves the best known sample complexity among second-order methods in reinforcement learning, up to logarithmic factors.}
        
        This work establishes a theoretical breakthrough in reinforcement learning. Under general non-convexity assumptions, the proposed VR-CR-PN algorithm converges to an $\epsilon$-second-order stationary point using only $\tilde{\mathcal{O}}(\epsilon^{-3})$ samples. This represents a substantial improvement over prior state-of-the-art methods, such as CR-PN~\cite{pmlr-v238-maniyar24a}, which require $\tilde{\mathcal{O}}(\epsilon^{-3.5})$ samples. 
        
\section{Simulation Experiments}
\label{sec:experiments}
        We compare our method against the CR-PN algorithm, which applies cubic regularization in policy optimization without employing any variance reduction mechanisms.
        The experimental results show that the VR-CR-PN method significantly outperforms traditional policy gradient methods in terms of convergence speed and stability.

        The experimental environment employed in our study is CartPole-v1, which is a classic benchmark environment in reinforcement learning that models an inverted pendulum control problem. In this task, an agent must learn to balance a pole vertically atop a movable cart by applying discrete forces to the cart's base. The state space $\sS$ consists of four continuous variables: cart position $x$, cart velocity $\dot{x}$, pole angle $\omega$, and pole angular velocity $\dot{\omega}$. The agent receives a positive reward for each timestep the pole remains upright, with an episode terminating when the pole deviates beyond $\pm12^\circ$ from vertical or the cart moves outside the predefined bounds. 
        At each timestep prior to termination, the agent receives a unit reward $r = 1$, whereas upon failure it receives zero reward $r = 0$ and the episode terminates. This environment configuration clearly satisfies Assumption \ref{a-0}.
        
        The environment's simplicity and well-defined dynamics make it particularly suitable for evaluating fundamental aspects of policy optimization algorithms, especially in terms of sample efficiency and learning stability. 
        The action space for any non-terminal state is defined as 
        $\sA=\{0,1\}$, where action 0 corresponds to moving the CartPole system to the left and action 1 to the right.
        To model the policy distribution of the CartPole system, we employ a log-linear parameterization approach:
        \begin{align}
            \pi_{\theta}(a|s)=\frac{\exp(\vs^\top\theta_a)}{\sum_{a'\in\sA}\exp(\vs^\top\theta_{a'})
            },
        \end{align}
        where $\vs \in \RR^4$ denotes the vectorized representation of the state space $\sS$. The dimensionality of the policy parameter vector $\theta$ is determined by the product of the state space cardinality and action space cardinality, specifically $|\sS|\times|\sA|=8$.
        when the parameter vector $\theta$ is bounded, this particular policy parameterization can be rigorously shown to satisfy Assumptions \ref{a-1} through \ref{a-3}. For a rigorous derivation of these results, we refer the reader to Appendix \ref{app:loglinear-regularity}, where the complete mathematical proofs are presented in detail.

        To ensure the subproblems in our optimization procedure consistently yield values greater than the objective function, we carefully selected a sufficiently large cubic regularization coefficient $M$ based on the discount factor $\gamma$. Due to the inherent challenges in reinforcement learning, where precise computation of gradient norms and Hessian eigenvalues proves intractable, we follow the methodology established in prior work \cite{pmlr-v238-maniyar24a} by instead comparing estimated function values.
        For rigorous experimental comparison, both algorithms were configured with identical upper bounds on sample size (50,000 samples). 
	\begin{figure}[tp!]
            \centering
            \includegraphics[width=\linewidth]{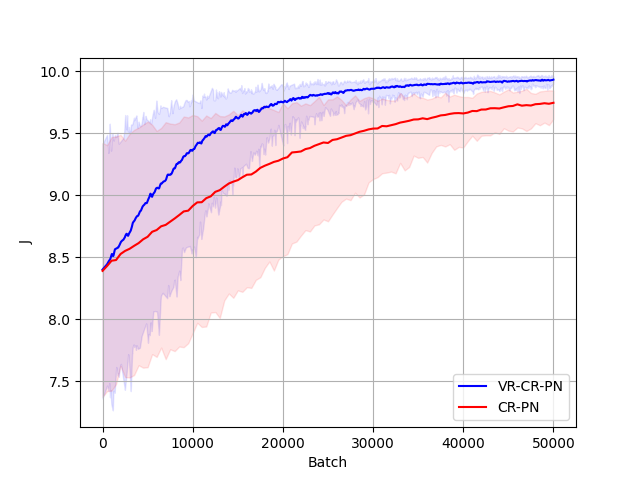}
            \caption{Experimental Results}
            \label{fig:screenshot001}
	\end{figure}  
        
        Given the non-negligible variance in function value sampling, we adopted two key measures to enhance estimation reliability: first, we increased the number of samples per episode through multiple experimental repetitions; second, after collecting 50,000 samples, we performed a final function value estimation using a significantly larger sample size to reduce variance. This two-phase approach balances computational efficiency with estimation accuracy.
      
        The comparative analysis in \cref{fig:screenshot001} reveals that the VR-CR-PN algorithm maintains a consistent performance advantage of 0.25-0.5 over CR-PN across almost all batch sizes (50,00-50,000), while exhibiting markedly reduced outcome variability as evidenced by the narrower semi-transparent bands.
        
        The sustained performance gap between variance-reduced and standard implementations suggests the proposed control mechanisms effectively mitigate stochastic fluctuations while accelerating convergence, consistent with the prior algorithmic performance analysis of VR-CR-PN. This observed behavior aligns with theoretical expectations for variance-reduced policy optimization. The results demonstrate VR-CR-PN's dual advantages in both central tendency and stability - particularly valuable for applications requiring predictable convergence behavior.
        
\section{Conclusions}
        In this work, we present a variance-reduced cubic-regularized policy Newton method (VR-CR-PN) accompanied by a novel Hessian estimator. 
        Our algorithm employs a specialized variance reduction technique to achieve the best-known sample complexity for convergence to second-order stationary points under standard assumptions. Furthermore, the proposed Hessian estimator attains a spectral norm bound that is independent of the truncation horizon $H$.
        Rigorous theoretical analysis establishes the algorithm's convergence to second-order stationary points of the value function - a property that guarantees escape from saddle points. Notably, the proposed method achieves an improved sample complexity of $\widetilde{O}(\epsilon^{-3})$, representing an $\widetilde{O}(\epsilon^{-0.5})$ reduction compared to the current state-of-the-art $\widetilde{O}(\epsilon^{-3.5})$. To our knowledge, this constitutes the best sample complexity under equivalent assumptions in the literature.

\bibliographystyle{IEEEtran}
\bibliography{references}

\section{Useful Lemmas}
\label{app:lemmas}

\subsection{Differentiation Under Expectation}
\label{sec:diff-expectation}
We establish a general result for differentiating expectations under parameterized distributions, which is useful for computing policy gradients in reinforcement learning.

\begin{lemma}[Differentiation of Expectations]
\label{lem:diff-expectation}
Let $Y$ be a random variable drawn from a distribution $p(y; \theta)$ that is differentiable with respect to $\theta$, and let $f(Y)$ be a differentiable function. Then the gradient of the expectation of $f(Y)$ with respect to $\theta$ is
\begin{align*}
&\nabla \EE_{Y \sim p(\cdot; \theta)} f(Y) \\
&= \EE_{Y \sim p(\cdot; \theta)} \left[ \nabla f(Y) + f(Y) \otimes \nabla \log p(Y; \theta) \right],    
\end{align*}

where $\nabla$ denotes the gradient with respect to $\theta$, and $\otimes$ denotes the tensor product between $f(Y)$ and $\nabla \log p(Y; \theta)$.
\end{lemma}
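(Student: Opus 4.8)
The plan is to reduce the statement to the classical Leibniz rule for differentiating under the integral sign together with the log-derivative identity $\nabla_\theta p(y;\theta) = p(y;\theta)\,\nabla_\theta\log p(y;\theta)$. Note first that the formula is only meaningful if $f$ is permitted to carry its own dependence on $\theta$ (otherwise $\nabla f(Y)$ would vanish identically); this is precisely the regime in which the lemma is invoked later, e.g.\ with $f$ equal to a gradient such as $\sum_k \gamma^k r_k \nabla X(k;\tau)$. Accordingly, throughout I read $\nabla$ as $\nabla_\theta$ and write the expectation explicitly as $\EE_{Y\sim p(\cdot;\theta)} f(Y) = \int f(y;\theta)\,p(y;\theta)\,dy$ (a sum in the discrete case, with the same argument verbatim).

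Next I would apply the Leibniz integral rule to move the gradient inside the integral, $\nabla_\theta\int f(y;\theta)\,p(y;\theta)\,dy = \int \nabla_\theta\bigl[f(y;\theta)\,p(y;\theta)\bigr]\,dy$, and then expand the integrand by the product rule. Because $p(y;\theta)$ is scalar while $f(y;\theta)$ is a (possibly higher-order) tensor, the term in which $\nabla_\theta$ hits $p$ appends the new derivative index last and is therefore written $f(y;\theta)\otimes\nabla_\theta p(y;\theta)$, matching the $\otimes$ convention of the Notation section, whereas the term in which $\nabla_\theta$ hits $f$ is $\bigl[\nabla_\theta f(y;\theta)\bigr]p(y;\theta)$. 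This yields $\int\bigl[\nabla_\theta f(y;\theta)\bigr]p(y;\theta)\,dy + \int f(y;\theta)\otimes\nabla_\theta p(y;\theta)\,dy$.

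Finally I would substitute $\nabla_\theta p(y;\theta) = p(y;\theta)\,\nabla_\theta\log p(y;\theta)$, valid wherever $p(y;\theta)>0$, into the second integral so that both integrals are again expectations against $p(\cdot;\theta)$; recombining gives $\EE_{Y\sim p(\cdot;\theta)}\bigl[\nabla f(Y) + f(Y)\otimes\nabla\log p(Y;\theta)\bigr]$, which is the claim. On the null set where $p(y;\theta)=0$, nonnegativity and differentiability of $p$ force $\nabla_\theta p(y;\theta)=0$ as well, so omitting it from the substitution changes nothing.

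The main obstacle — indeed the only step that is not a mechanical identity — is justifying the interchange of $\nabla_\theta$ and the integral in the Leibniz step. This requires a domination hypothesis: that $\norm{\nabla_\theta[f(y;\theta)\,p(y;\theta)]}$ is bounded, uniformly for $\theta$ in a neighborhood of the point of interest, by a fixed integrable function of $y$. I would either record this as a standing regularity assumption on $p(\cdot;\theta)$ and $f$, or — in the reinforcement-learning instantiation — verify it directly, since there $p(\tau;\theta)$ is a finite product of policy and transition factors over a trajectory of bounded length $H$ and the relevant integrands are uniformly bounded under \cref{a-0,a-1,a-2,a-3}, which makes the needed domination immediate.
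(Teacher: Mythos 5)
Your proposal is correct and follows essentially the same route as the paper: differentiate under the integral/sum, apply the product rule, and convert $\nabla_\theta p$ via the log-derivative identity $\nabla_\theta p = p\,\nabla_\theta\log p$ so both terms become expectations. The only difference is that you explicitly justify the interchange of $\nabla_\theta$ with the integral (domination) and handle the $p=0$ set, points the paper's proof passes over silently by working formally with a sum.
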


\begin{proof}
Using the product rule and the chain rule:
\begin{align*}
&\nabla \EE_{Y \sim p(\cdot; \theta)} f(Y)
= \nabla \sum_{Y} p(Y; \theta) f(Y) \\
&= \sum_{Y} \left[ \nabla p(Y; \theta) \cdot f(Y) + p(Y; \theta) \cdot \nabla f(Y) \right] \\
&= \sum_{Y} p(Y; \theta) \left[ \nabla f(Y) + f(Y) \otimes \nabla \log p(Y; \theta) \right] \\
&= \EE_{Y \sim p(\cdot; \theta)} \left[ \nabla f(Y) + f(Y) \otimes \nabla \log p(Y; \theta) \right].
\end{align*}
\end{proof}

In reinforcement learning, applying Lemma~\ref{lem:diff-expectation} to trajectories $\tau \sim \Pr(\cdot\mid\theta)$ gives:
\[
\nabla \EE_{\tau \sim \pi_\theta} f(\tau) = \EE_{\tau \sim \pi_\theta} \left[ \nabla f(\tau) + f(\tau) \otimes \nabla X(T; \tau) \right].
\]

When $f(\tau)$ depends only on the prefix of $\tau$ up to state $s_t$, then we can just add up the log-gradients up to step $t-1$:
\[
\nabla \EE_{\tau \sim \pi_\theta} f(\tau^{[t]}) = \EE_{\tau \sim \pi_\theta} \left[ \nabla f(\tau^{[t]}) + f(\tau^{[t]}) \otimes \nabla X(t; \tau) \right].
\]
\subsection{Azuma–Hoeffding Inequality}
We invoke two concentration inequalities that generalize the classical Azuma--Hoeffding inequality to vector-valued and matrix-valued martingale difference sequences.

\begin{lemma}[Vector Azuma Inequality {\cite{pinelis1994optimum}}]
\label{lem:vector-azuma}
Let $\{\vv_k\}$ be a vector-valued martingale difference sequence adapted to a filtration, such that $\EE[\vv_k \mid \sigma(\vv_1, \dots, \vv_{k-1})] = \bs{0}$ and $\|\vv_k\|_2 \leq A_k$ almost surely. Then with probability at least $1 - \delta$,
\[
\left\| \sum_k \vv_k \right\|_2 \leq 3 \sqrt{ \log(1/\delta) \sum_k A_k^2 }.
\]
\end{lemma}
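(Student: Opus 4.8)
The plan is to adapt the Chernoff (exponential-moment) method to the vector setting by using $\cosh(\lambda\norm{\cdot})$ as a surrogate for the scalar moment generating function; this is what delivers a \emph{dimension-free} bound, in contrast to a covering-net/union-bound over the unit sphere, which would incur a spurious factor depending on $d$. Writing $S_n=\sum_{k\le n}\vv_k$ and $\mathcal F_{k}=\sigma(\vv_1,\dots,\vv_k)$, I first reduce everything to a single one-step conditional estimate of the form
\[
\EE\!\left[\cosh(\lambda\norm{S_k})\mid \mathcal F_{k-1}\right]\le \cosh(\lambda\norm{S_{k-1}})\,\cosh(\lambda A_k),
\]
valid for every $\lambda>0$. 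Granting this, iterating over $k=1,\dots,n$ with the tower property yields $\EE[\cosh(\lambda\norm{S_n})]\le\prod_k\cosh(\lambda A_k)\le\exp(\tfrac{\lambda^2}{2}\sum_k A_k^2)$, using the elementary inequality $\cosh t\le e^{t^2/2}$.

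The heart of the argument, and the step I expect to be the main obstacle, is the one-step inequality, since this is exactly where Hilbert-space geometry must be exploited to avoid a dimension penalty. I would condition on $\mathcal F_{k-1}$, treat $x:=S_{k-1}$ as fixed and $d:=\vv_k$ as a zero-mean vector with $\norm{d}\le A_k$ almost surely, and use the parallelogram identity $\norm{x+d}^2=\norm{x}^2+2\langle x,d\rangle+\norm{d}^2$. The key observation is that $t\mapsto\cosh(\lambda\sqrt{t})$ is convex and increasing on $[0,\infty)$, so $\cosh(\lambda\norm{x+d})$ is a convex function of $t(d):=\norm{x+d}^2$. Since $t(d)$ ranges within a bounded interval whose endpoints correspond to $\norm{x+d}\in\{|\norm{x}-A_k|,\ \norm{x}+A_k\}$, convexity lets me upper bound the conditional expectation by the chord over that interval (equivalently, by the worst two-point distribution); the chord is linear, so $\EE[\langle x,d\rangle]=0$ and $\EE\norm{d}^2\le A_k^2$ collapse the bound to the chord evaluated at $\norm{x}^2+A_k^2$. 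The crucial simplification is that $\norm{x}^2+A_k^2$ is precisely the midpoint of $[(\norm{x}-A_k)^2,(\norm{x}+A_k)^2]$, so the chord value equals $\tfrac12[\cosh(\lambda(\norm{x}-A_k))+\cosh(\lambda(\norm{x}+A_k))]=\cosh(\lambda\norm{x})\cosh(\lambda A_k)$ by the product-to-sum identity for $\cosh$. This is exactly the claimed one-step bound; the only case needing extra care is the degenerate regime $\norm{x}<A_k$, where the lower endpoint of the range of $t(d)$ is $0$ rather than $(\norm{x}-A_k)^2$, which I would dispatch by the same chord/two-point comparison on the interval $[0,(\norm{x}+A_k)^2]$.

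With the exponential-moment bound $\EE[\cosh(\lambda\norm{S_n})]\le\exp(\tfrac{\lambda^2}{2}\sum_k A_k^2)$ in hand, I finish by a Markov inequality applied to the nonnegative variable $\cosh(\lambda\norm{S_n})$: since $\cosh(\lambda r)\ge\tfrac12 e^{\lambda r}$, for any $r>0$
\[
\Pr\!\left(\norm{S_n}\ge r\right)\le\frac{\EE[\cosh(\lambda\norm{S_n})]}{\cosh(\lambda r)}\le 2\exp\!\left(\tfrac{\lambda^2}{2}\textstyle\sum_k A_k^2-\lambda r\right).
\]
Optimizing over $\lambda$ by taking $\lambda=r/\sum_k A_k^2$ gives the dimension-free sub-Gaussian tail $\Pr(\norm{S_n}\ge r)\le 2\exp(-r^2/(2\sum_k A_k^2))$. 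Setting the right-hand side equal to $\delta$ and solving yields $r=\sqrt{2\log(2/\delta)\sum_k A_k^2}$, and the last step is the elementary verification that $2\log(2/\delta)\le 9\log(1/\delta)$ over the relevant range of $\delta$, which replaces the sharp constant by the looser but cleaner factor $3$ stated in the lemma. This recovers the Pinelis bound and completes the proof.
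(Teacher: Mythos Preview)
The paper does not prove this lemma; it is quoted from \cite{pinelis1994optimum} without argument. Your proposal goes beyond the paper by sketching an actual proof along Pinelis's own $\cosh$-supermartingale line, which is indeed the standard dimension-free route. The outline is correct: the one-step inequality $\EE[\cosh(\lambda\|S_k\|)\mid\mathcal F_{k-1}]\le\cosh(\lambda\|S_{k-1}\|)\cosh(\lambda A_k)$ is the heart of the matter, and your chord argument via the convexity of $t\mapsto\cosh(\lambda\sqrt{t})$ handles the main regime $\|S_{k-1}\|\ge A_k$ cleanly. Two loose ends remain. First, in the degenerate case $\|x\|<A_k$ the chord over $[(\|x\|-A_k)^2,(\|x\|+A_k)^2]$ no longer dominates $f(t(d))$ pointwise (since $t(d)$ can reach $0$), and your proposed fallback chord over $[0,(\|x\|+A_k)^2]$ evaluated at $\|x\|^2+A_k^2$ does not collapse to $\cosh(\lambda\|x\|)\cosh(\lambda A_k)$ via the product formula; an additional elementary inequality is genuinely needed there, so this step is not quite the formality you suggest. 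Second, the comparison $2\log(2/\delta)\le 9\log(1/\delta)$ fails once $\delta\gtrsim 0.82$; this is harmless for the paper's applications, where $\delta$ is always tiny, but the constant $3$ is not literally recovered for all $\delta\in(0,1)$ from your tail bound alone.
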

\begin{lemma}[Matrix Azuma Inequality {\cite{tropp2011freedman}}]
\label{lem:matrix-azuma}
Let $\{\mX_k\}$ be a finite adapted sequence of self-adjoint matrices in dimension $d$, and let $\{\mA_k\}$ be a fixed sequence of self-adjoint matrices satisfying
\[
\EE[\mX_k \mid \sigma(\mX_1, \dots, \mX_{k-1})] = \bs{0}, \quad \text{and} \quad \mX_k^2 \preceq \mA_k^2 \quad \text{almost surely}.
\]
Then with probability at least $1 - \delta$,
\[
\left\| \sum_k \mX_k \right\|_2 \leq 3 \sqrt{ \log(d/\delta) \sum_k \| \mA_k \|_2^2 }.
\]
\end{lemma}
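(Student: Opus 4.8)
The plan is to prove this via the matrix Laplace transform method of Ahlswede--Winter and Tropp, reducing the operator-norm tail bound to a bound on the maximum eigenvalue of the trace moment generating function. Writing $\mathbf{Y} := \sum_k \mX_k$, I first note that $\|\mathbf{Y}\|_2 = \max\{\lambda_{\max}(\mathbf{Y}),\,\lambda_{\max}(-\mathbf{Y})\}$. Since $(-\mX_k)^2 = \mX_k^2 \preceq \mA_k^2$ and $\EE[-\mX_k \mid \sigma(\mX_1,\dots,\mX_{k-1})] = \bs{0}$, the sequence $\{-\mX_k\}$ satisfies exactly the same hypotheses, so it suffices to bound $\Pr\{\lambda_{\max}(\mathbf{Y}) \geq t\}$ and then apply a union bound over the two signs, which doubles the failure probability. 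For the one-sided bound I would invoke the matrix Markov inequality: for any $\theta > 0$,
\[
\Pr\{\lambda_{\max}(\mathbf{Y}) \geq t\} \leq e^{-\theta t}\,\EE\,\operatorname{tr} e^{\theta \mathbf{Y}},
\]
which follows from $\lambda_{\max}(\mathbf{Y}) = \theta^{-1}\log\lambda_{\max}(e^{\theta\mathbf{Y}}) \leq \theta^{-1}\log\operatorname{tr} e^{\theta\mathbf{Y}}$ combined with scalar Markov applied to the nonnegative random variable $\operatorname{tr} e^{\theta\mathbf{Y}}$.

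The core step is to control $\EE\,\operatorname{tr} e^{\theta\mathbf{Y}}$, and I would do this by peeling off the terms one index at a time, conditioning on the filtration from the last index inward. The main obstacle is non-commutativity: because the $\mX_k$ need not commute, $e^{\theta\mathbf{Y}} \neq \prod_k e^{\theta\mX_k}$, so the conditional expectation cannot be factored out naively. The resolution is Lieb's concavity theorem, which asserts that the map $\mathbf{M} \mapsto \operatorname{tr}\exp(\mathbf{H} + \log\mathbf{M})$ is concave on the positive-definite cone for each fixed self-adjoint $\mathbf{H}$. Applying Jensen's inequality through this concave map lets me pull $\EE_{k-1}[\,\cdot\,]$ inside the trace exponential, replacing the random factor $e^{\theta\mX_k}$ by its conditional expectation $\EE_{k-1}e^{\theta\mX_k}$ while preserving the operator order. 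This yields the subadditivity-of-cumulants bound
\[
\EE\,\operatorname{tr} e^{\theta\mathbf{Y}} \leq \operatorname{tr}\exp\!\Big(\textstyle\sum_k \log\EE_{k-1}e^{\theta\mX_k}\Big).
\]

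To close the loop I would establish the matrix Hoeffding lemma for each conditional factor: under $\EE_{k-1}\mX_k = \bs{0}$ and $\mX_k^2 \preceq \mA_k^2$ almost surely,
\[
\log\EE_{k-1}e^{\theta\mX_k} \preceq \tfrac{\theta^2}{2}\mA_k^2 \quad\text{a.s.}
\]
This is the matrix analogue of the scalar sub-Gaussian bound: it is obtained by applying a scalar quadratic upper bound on $e^{\theta x}$ spectrally to the self-adjoint $\mX_k$, using the mean-zero hypothesis to annihilate the first-order term and $\mX_k^2 \preceq \mA_k^2$ to dominate the second-order term. Substituting into the subadditivity bound and using that each $\mA_k^2 \succeq \bs{0}$ gives
\[
\EE\,\operatorname{tr} e^{\theta\mathbf{Y}} \leq \operatorname{tr}\exp\!\Big(\tfrac{\theta^2}{2}\textstyle\sum_k \mA_k^2\Big) \leq d\,\exp\!\Big(\tfrac{\theta^2}{2}\,\lambda_{\max}\big(\textstyle\sum_k \mA_k^2\big)\Big) \leq d\,\exp\!\Big(\tfrac{\theta^2}{2}\textstyle\sum_k \|\mA_k\|_2^2\Big),
\]
where the final step uses $\lambda_{\max}\big(\sum_k \mA_k^2\big) \leq \sum_k \lambda_{\max}(\mA_k^2) = \sum_k \|\mA_k\|_2^2$.

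Combining with the Laplace bound gives $\Pr\{\lambda_{\max}(\mathbf{Y}) \geq t\} \leq d\exp\!\big(-\theta t + \tfrac{\theta^2}{2}\sigma^2\big)$ with $\sigma^2 := \sum_k \|\mA_k\|_2^2$. Optimizing at $\theta = t/\sigma^2$ yields $d\exp(-t^2/(2\sigma^2))$, and the two-sided union bound turns this into $2d\exp(-t^2/(2\sigma^2))$. Setting the right-hand side equal to $\delta$ and solving gives $t = \sigma\sqrt{2\log(2d/\delta)}$, and the stated clean form with the constant $3$ and the logarithm $\log(d/\delta)$ follows by the crude bound $\sqrt{2\log(2d/\delta)} \leq 3\sqrt{\log(d/\delta)}$, which the generous constant absorbs. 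I expect the genuinely nontrivial ingredient to be the Lieb-concavity step in the second paragraph: justifying the martingale peeling in the presence of non-commuting summands is the only place where heavy machinery is required, whereas the scalar-to-matrix transfer for the conditional cumulant and the final optimization are routine once that tool is in hand.
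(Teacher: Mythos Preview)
The paper does not prove this lemma; it is stated as a known result and attributed to the cited reference of Tropp. Your outline is exactly the standard matrix-Laplace-transform argument from that line of work: Markov on $\operatorname{tr}e^{\theta\mathbf{Y}}$, Lieb concavity to peel off the martingale increments, a matrix Hoeffding cgf bound for each conditional factor, and optimization in $\theta$. So there is nothing to compare against in the paper itself, and your approach is the right one.

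Two small points worth tightening. First, your displayed ``subadditivity-of-cumulants'' inequality has a random right-hand side (each $\log\EE_{k-1}e^{\theta\mX_k}$ depends on the past), so as written it needs an outer expectation; in practice one iterates the Lieb--Jensen step and substitutes the deterministic almost-sure bound $\tfrac{\theta^2}{2}\mA_k^2$ at each stage before proceeding to the next conditioning, which is what you do implicitly in the next line. Second, the matrix Hoeffding cgf bound $\log\EE_{k-1}e^{\theta\mX_k}\preceq\tfrac{\theta^2}{2}\mA_k^2$ does not follow simply by ``applying a scalar quadratic upper bound spectrally'': because $\mX_k$ and $\mA_k$ need not commute, transferring an operator inequality in $\mX_k^2$ to one in $\mA_k^2$ requires care (squares and exponentials are not operator monotone). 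Tropp's original proof handles this via a symmetrization step, which yields the constant $1/8$ rather than $1/2$ in the exponent; with that constant the final bound is $2\sqrt{2}\sqrt{\log(2d/\delta)\sum_k\|\mA_k\|_2^2}$, and absorbing the factor $2$ inside the logarithm into the constant $3$ then requires $d/\delta$ not too close to $1$. None of this affects the structure of your argument, but the cgf step is where the real work is and is worth stating with its actual proof rather than by analogy with the scalar case.
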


\subsection{Solutions to Logarithmic Inequalities}

\begin{lemma}
\label{lem:log-ineq}
Let $a, b > 0$, and $K \geq \max\{ a,\, 2a \log(2ab) \}$. Then
\[
a \log(bK) \leq K.
\]
\end{lemma}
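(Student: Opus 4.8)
The plan is to prove the logarithmic inequality $a\log(bK)\le K$ under the hypothesis $K\ge\max\{a,\,2a\log(2ab)\}$ by a direct monotonicity argument. Define $\phi(K):=K-a\log(bK)$ on $K>0$. Since $\phi'(K)=1-a/K$, the function $\phi$ is increasing for $K\ge a$, so it suffices to verify $\phi(K_0)\ge 0$ at the single point $K_0:=\max\{a,\,2a\log(2ab)\}$, because then $\phi(K)\ge\phi(K_0)\ge 0$ for all $K\ge K_0$. I would split into two cases according to which term attains the maximum.

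First I would handle the case $K_0=2a\log(2ab)$ (which in particular forces $2ab\ge e^{1/2}$ or so, so the logarithm is positive and the expression makes sense). Here I want to show $2a\log(2ab)\ge a\log\bigl(b\cdot 2a\log(2ab)\bigr)=a\log(2ab)+a\log\log(2ab)$, i.e.\ $a\log(2ab)\ge a\log\log(2ab)$, i.e.\ $\log(2ab)\ge\log\log(2ab)$. This is the elementary fact $x\ge\log x$ for all $x>0$, applied with $x=\log(2ab)$ when $\log(2ab)>0$; if instead $\log(2ab)\le 0$ then $K_0=a$ and we are in the other case. So this case reduces to the standard bound $\log x\le x$.

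Second, the case $K_0=a$: then I need $a\ge a\log(ab)$, i.e.\ $\log(ab)\le 1$, i.e.\ $ab\le e$. This does not follow from $a\ge 2a\log(2ab)$ in general, so here I should instead argue directly that whenever $K\ge 2a\log(2ab)$ \emph{and} $K\ge a$, the conclusion holds — that is, keep both lower bounds in play rather than reducing to whichever is larger. Concretely, for $K\ge a$ we have $\phi(K)\ge\phi(\max\{a,2a\log(2ab)\})$, and the previous paragraph shows $\phi(2a\log(2ab))\ge 0$ whenever $2ab\ge e$ (equivalently $\log(2ab)\ge 1>0$); in the remaining regime $2ab<e$ one checks $a\log(bK)\le a\log(eK/2)\le K$ using $\log(eK/2)=1+\log(K/2)\le 1+K/2-1=K/2\le K$ for... — more carefully, $\log(eK/2)\le eK/2\cdot$ is too crude, so I would instead use $bK< (e/2)K/a\cdot a = $ — the cleanest route is: $bK\le \tfrac{e}{2a}K$, hence $a\log(bK)\le a\log\tfrac{e}{2a}+a\log K$; this still has an $a\log K$ term, so the honest fix is to simply not split and instead prove the statement from $K\ge 2a\log(2ab)$ together with $K\ge 2a$ (note the hypothesis already gives $K\ge 2a\log(2ab)$, and separately $K\ge a$; if additionally $2ab\ge\sqrt e$ then $2\log(2ab)\ge1$ so $2a\log(2ab)\ge a$, and if $2ab<\sqrt e$ then $ab<1$ so $\log(bK)\le\log K\le K/a$ once $K\ge a$... again an $a\log K$ term).

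Given these false starts, the \textbf{main obstacle} is pinning down the correct elementary bound so that the constant $2$ and the argument $2ab$ in the hypothesis are exactly what is needed; the right approach is the monotonicity of $\phi(K)=K-a\log(bK)$ on $K\ge a$ reducing everything to the single evaluation $\phi\bigl(2a\log(2ab)\bigr)\ge 0$ (valid since the hypothesis $K\ge 2a\log(2ab)$ is \emph{always} assumed, and when $\log(2ab)\le 0$ the claim $a\log(bK)\le a\log(2ab)+a\log(K/(2a))\le 0+K/2\cdot$ — i.e.\ bound $\log(K/(2a))\le K/(2a)-1$ and $a\log(2ab)\le 0$), and that single evaluation collapses to $\log x\le x$ with $x=\log(2ab)$ as shown above. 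I would present it in exactly that order: state $\phi$, note $\phi'\ge 0$ on $K\ge a$, reduce to $K=K_0$, then dispatch via $\log x\le x$ and the linear bound $\log y\le y-1$ for the low-$ab$ regime. I expect the write-up to be under half a page once the case bookkeeping is organized cleanly.
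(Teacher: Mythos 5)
Your core strategy---set $\phi(K)=K-a\log(bK)$, note $\phi'(K)=1-a/K\ge 0$ for $K\ge a$, and reduce to one boundary evaluation---is exactly the paper's proof: the paper evaluates at $K'=2a\log(2ab)$ and checks $a\log(bK')<K'$, which is your Case 1 collapsing to $\log x\le x$ with $x=\log(2ab)$. That half of your argument is correct and matches the paper.

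The genuine gap is your Case 2, where $K_0=a$, i.e.\ $a\ge 2a\log(2ab)$. You assert that the needed bound $\log(ab)\le 1$ ``does not follow from $a\ge 2a\log(2ab)$ in general''---but it does: dividing by $a>0$ gives $\log(2ab)\le 1/2$, hence $ab\le e^{1/2}/2<e$, hence $\phi(a)=a\left(1-\log(ab)\right)>0$, and monotonicity of $\phi$ on $[a,\infty)$ closes this case in one line. Because you believed this false obstruction, the remainder of the proposal is a chain of abandoned computations, and the plan you finally settle on is not sound as stated: you propose to reduce everything to the single evaluation $\phi(2a\log(2ab))\ge 0$ ``since $K\ge 2a\log(2ab)$ is always assumed,'' but $\phi$ is increasing only on $[a,\infty)$; when $2a\log(2ab)<a$ the evaluation point sits where $\phi$ is decreasing (and when $\log(2ab)\le 0$ the point is nonpositive, so $\phi$ is not even defined there), so the bound cannot be transferred to $K\ge a$. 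Your fallback via $\log y\le y-1$ is invoked only for $\log(2ab)\le 0$, which leaves the regime $0<\log(2ab)<1/2$ uncovered as written (though either that same linear bound or the observation above would handle it). In short, all the ingredients are present and the approach is the paper's, but the case $K_0=a$ is never actually closed; the one-line argument $\log(2ab)\le 1/2\Rightarrow ab<e\Rightarrow\phi(a)>0$ is what is missing.
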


\begin{proof}
Define $K' := 2a \log(2ab)$. Then
\[
a \log(bK') = a \log(2ab \log(2ab)) < 2a \log(2ab) = K'.
\]
For all $K > K'$, we have $(a \log(bK) - K)' = a/K - 1 < 0$, so the inequality holds.
\end{proof}

\subsection{Optimality Conditions of the Cubic Subproblem}
\label{sec:o-cnm}
\begin{lemma}\cite{carmon2019gradient}
For any vector $\vg$ and symmetric matrix $\mH$, let $\vh_t$ be a global minimizer of the cubic model
\begin{align}\label{align:cubic-subproblem-2}
m_t(\vh) := \NP{\vg_t}{\vh} + \frac{1}{2} \vh^\tr \mH_t \vh + \frac{M}{6} \norm{\vh}^3.    
\end{align}

Then the following conditions hold:
\begin{align}
\label{eq:cubic-optimality-gradient}
\vg_t + \mH_t \vh_t + \frac{M}{2} \norm{\vh_t} \vh_t &= \bs{0}, \\
\label{eq:cubic-optimality-hessian}
\mH_t + \frac{M}{2} \norm{\vh_t} \mI &\succeq \bs{0}.
\end{align}
\end{lemma}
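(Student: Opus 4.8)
The plan is to treat the two conclusions separately, since \eqref{eq:cubic-optimality-gradient} is merely first-order stationarity while \eqref{eq:cubic-optimality-hessian} is the genuinely nontrivial consequence of \emph{global} (not merely local) minimality. First I would establish \eqref{eq:cubic-optimality-gradient}. The map $\vh \mapsto \norm{\vh}^3 = (\norm{\vh}^2)^{3/2}$ is continuously differentiable on all of $\RR^d$ (including the origin), with gradient $3\norm{\vh}\vh$; hence $m_t$ is $C^1$ and its cubic term contributes $\tfrac{M}{2}\norm{\vh}\vh$ to $\nabla m_t$. As $\vh_t$ globally minimizes $m_t$, it is in particular stationary, so $\nabla m_t(\vh_t) = \vg_t + \mH_t\vh_t + \tfrac{M}{2}\norm{\vh_t}\vh_t = \bs{0}$, which is exactly \eqref{eq:cubic-optimality-gradient}.

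Write $r := \norm{\vh_t}$ and $\mathbf{A} := \mH_t + \tfrac{M}{2} r\,\mI$, so that \eqref{eq:cubic-optimality-gradient} reads $\vg_t = -\mathbf{A}\vh_t$ and the remaining target \eqref{eq:cubic-optimality-hessian} is exactly $\mathbf{A} \succeq \bs{0}$. The core of the argument is an exact identity for the model gap. Substituting $\vg_t = -\mathbf{A}\vh_t$ and $\mH_t = \mathbf{A} - \tfrac{M}{2}r\,\mI$ into $m_t(\vh) - m_t(\vh_t)$ and collecting terms, I would derive
\begin{align*}
m_t(\vh) - m_t(\vh_t) &= \tfrac{1}{2}(\vh - \vh_t)^\tr \mathbf{A}(\vh - \vh_t) \\
&\quad + \tfrac{M}{12}\bigl(\norm{\vh} - r\bigr)^2\bigl(2\norm{\vh} + r\bigr),
\end{align*}
where the residual arises from the elementary factorization $2s^3 - 3rs^2 + r^3 = (s-r)^2(2s+r)$ with $s = \norm{\vh}$. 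Since $s, r \geq 0$, this residual is nonnegative, and global minimality therefore forces $\tfrac12(\vh-\vh_t)^\tr\mathbf{A}(\vh-\vh_t) \geq -\tfrac{M}{12}(\norm{\vh}-r)^2(2\norm{\vh}+r)$ for every $\vh \in \RR^d$.

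The main obstacle, and the step where global minimality is indispensable, is upgrading this inequality to $\mathbf{A}\succeq\bs{0}$: the naive local second-order test only gives $\mathbf{A} + \tfrac{M}{2r}\vh_t\vh_t^\tr \succeq \bs{0}$ (for $\vh_t \neq \bs{0}$), which is strictly weaker because of the extra rank-one positive term. I would argue by contradiction: suppose $\mathbf{u}$ is a unit vector with $\mathbf{u}^\tr\mathbf{A}\mathbf{u} = -\lambda < 0$, and test $\vh = \vh_t + t\mathbf{u}$. When $r > 0$ and $\NP{\vh_t}{\mathbf{u}} \neq 0$, the choice $t = -2\NP{\vh_t}{\mathbf{u}}$ keeps $\norm{\vh} = r$, annihilating the residual and leaving $m_t(\vh) - m_t(\vh_t) = -\tfrac{\lambda}{2}t^2 < 0$, contradicting minimality. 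When $\NP{\vh_t}{\mathbf{u}} = 0$ (or $r = 0$), a short Taylor estimate shows the residual is only of order $t^4$ (respectively $|t|^3$) as $t \to 0$, hence dominated by the negative quadratic term $-\tfrac{\lambda}{2}t^2$, again contradicting minimality for small $t$.

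Exhausting these cases rules out any negative eigenvalue of $\mathbf{A}$, yielding $\mathbf{A} = \mH_t + \tfrac{M}{2}\norm{\vh_t}\mI \succeq \bs{0}$, which is \eqref{eq:cubic-optimality-hessian}. The only delicate points I anticipate are the bookkeeping in the model-gap identity and the degenerate configurations ($r=0$ or $\mathbf{u}\perp\vh_t$), where the sphere-preserving perturbation is unavailable and one must instead exploit that the cubic residual vanishes to higher order than the destabilizing quadratic term.
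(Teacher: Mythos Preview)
Your proof is correct. The model-gap identity you derive is exact (the residual indeed factors as $\tfrac{M}{12}(s-r)^2(2s+r)$ after substituting $\vg_t = -\mathbf{A}\vh_t$ and $\mH_t = \mathbf{A} - \tfrac{M}{2}r\mI$), and your case split---sphere-preserving reflection $t = -2\NP{\vh_t}{\mathbf{u}}$ when the inner product is nonzero, higher-order residual vanishing otherwise---cleanly rules out any negative eigenvalue of $\mathbf{A}$.

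As for comparison: the paper does not actually prove this lemma. It is stated with a citation to \cite{carmon2019gradient} and used as a black box in the convergence analysis of \cref{sec:convergence}. Your argument is essentially the classical one going back to Nesterov and Polyak~\cite{nesterov2006cubic} (and reproduced in various forms in the cubic-regularization literature), so you have supplied the proof that the cited references contain.
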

It is immediate to verify that $m_t(\vh_t) \leq m_t(\bs{0}) = 0$. Moreover, a sharper upper bound on $m_t(\vh_t)$ can be established:
\begin{lemma}\cite{carmon2019gradient}
For any vector $\vg$ and symmetric matrix $\mH$, let $\vh_t$ be a global minimizer of the cubic model (\ref{align:cubic-subproblem-2}), then the following inequality holds:
\begin{equation}
\label{eq:model-descent}
m_t(\vh_t) \leq - \frac{M}{12} \norm{\vh_t}^3.
\end{equation}    
\end{lemma}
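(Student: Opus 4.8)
The plan is to prove the bound purely algebraically from the two optimality conditions already stated, namely the stationarity condition $\vg_t + \mH_t \vh_t + \frac{M}{2}\norm{\vh_t}\vh_t = \bs{0}$ and the curvature condition $\mH_t + \frac{M}{2}\norm{\vh_t}\mI \succeq \bs{0}$. No properties of the underlying objective $J$ are needed; the result is a statement about the minimizer of the cubic model $m_t$ alone. The strategy is to eliminate the linear term $\NP{\vg_t}{\vh_t}$ using the first condition, collapse $m_t(\vh_t)$ into an expression involving only the quadratic form $\vh_t^\tr \mH_t \vh_t$ and $\norm{\vh_t}^3$, and then control that quadratic form from below using the second condition.

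First I would take the inner product of the stationarity condition with $\vh_t$ to obtain
\[
\NP{\vg_t}{\vh_t} = -\vh_t^\tr \mH_t \vh_t - \frac{M}{2}\norm{\vh_t}^3.
\]
Substituting this into the definition $m_t(\vh_t) = \NP{\vg_t}{\vh_t} + \frac{1}{2}\vh_t^\tr \mH_t \vh_t + \frac{M}{6}\norm{\vh_t}^3$ and collecting the cubic terms via $-\frac{M}{2}+\frac{M}{6} = -\frac{M}{3}$ yields the compact identity
\[
m_t(\vh_t) = -\frac{1}{2}\vh_t^\tr \mH_t \vh_t - \frac{M}{3}\norm{\vh_t}^3.
\]
This reduces the problem to showing that the negative quadratic term cannot be too positive.

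Next I would invoke the curvature condition. Testing $\mH_t + \frac{M}{2}\norm{\vh_t}\mI \succeq \bs{0}$ against the specific vector $\vh_t$ gives $\vh_t^\tr \mH_t \vh_t \geq -\frac{M}{2}\norm{\vh_t}^3$, hence $-\frac{1}{2}\vh_t^\tr \mH_t \vh_t \leq \frac{M}{4}\norm{\vh_t}^3$. Combining this with the identity above produces
\[
m_t(\vh_t) \leq \frac{M}{4}\norm{\vh_t}^3 - \frac{M}{3}\norm{\vh_t}^3 = -\frac{M}{12}\norm{\vh_t}^3,
\]
where the final coefficient follows from $\frac{1}{4}-\frac{1}{3} = -\frac{1}{12}$. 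This is exactly the claimed descent bound.

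There is no genuine analytic obstacle here; the only thing to be careful about is the bookkeeping of the three rational coefficients ($\frac{1}{2}$, $\frac{1}{6}$, $\frac{M}{2}$) when merging the cubic terms, since an arithmetic slip would change the constant $\frac{1}{12}$. The conceptual subtlety worth flagging is that the curvature condition must be applied to the minimizer direction $\vh_t$ itself rather than to an arbitrary test vector; it is precisely this self-referential use of $\vh_t$ that matches the $\norm{\vh_t}$ appearing in the regularization and closes the estimate with the sharp constant.
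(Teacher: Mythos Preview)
Your proof is correct and is the standard argument for this inequality. The paper does not actually prove this lemma itself; it is stated without proof as a citation to \cite{carmon2019gradient}, so there is no in-paper derivation to compare against, but your approach matches the classical one in that reference.
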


\section{Proof of \cref{thm:hessian-ours}}
\label{sec:rl-derivatives}

In this section, we derive the first- and second-order derivatives of the expected return $J(\theta)$ in reinforcement learning. These results support the theoretical analysis of our algorithm. The function $J(\theta)$, the log-likelihood trace $X(k; \tau)$, and the constants $R_{\max}, G_1, G_2, G_3$ have been defined in the main text.
\label{sec:grad-hess-rl}

Recall that the objective function is
\[
J(\theta) = \EE_{\tau \sim \pi_\theta} \left[ \sum_{k=0}^\infty \gamma^k r_k \right].
\]
Applying Lemma~\ref{lem:diff-expectation}, the gradient becomes:
\begin{align*}
\nabla J(\theta)
&= \nabla \EE \sum_{k=0}^\infty \gamma^k r_k
= \sum_{k=0}^\infty \nabla \EE \left[ \gamma^k r_k \right] \\
&= \sum_{k=0}^\infty \EE \left[ \nabla \left( \gamma^k r_k \right) + \gamma^k r_k \cdot \nabla X(k; \tau) \right] \\
&= \sum_{k=0}^\infty \EE \left[ \gamma^k r_k \cdot \nabla X(k; \tau) \right],
\end{align*}
where the last step uses the fact that $r_k$ is not a function of $\theta$ under the standard stochastic MDP model.

Therefore, the policy gradient is:
\[
\nabla J(\theta) = \EE_{\tau \sim \pi_\theta} \left[ \sum_{k=0}^\infty \gamma^k r_k \cdot \nabla X(k; \tau) \right].
\]

For the Hessian, we compute:
\begin{align*}
&\nabla^2 J(\theta)\\
&= \nabla \EE \sum_{k=0}^\infty \gamma^k r_k \cdot \nabla X(k; \tau)
= \sum_{k=0}^\infty \nabla \EE \left[ \gamma^k r_k \cdot \nabla X(k; \tau) \right] \\
&= \sum_{k=0}^\infty \EE \left[
\gamma^k r_k \cdot \left( \nabla^2 X(k; \tau) + \nabla X(k; \tau) \cdot \nabla X(k; \tau)^\top \right)
\right],
\end{align*}
which yields the second-order formula:
\begin{align*}
&\nabla^2 J(\theta) \\
&= \EE \left[ \sum_{k=0}^\infty \gamma^k r_k \cdot \left( \nabla^2 X(k; \tau) + \nabla X(k; \tau) \nabla X(k; \tau)^\top \right) \right].    
\end{align*}

These formulas provide the basis for computing or estimating the gradient and Hessian of the objective using trajectories generated from the policy $\pi_\theta$.

\section{Proof of \cref{lemma-Jall}}
\subsection{Boundedness of Gradient and Hessian Estimators}
\label{app:grad-hess-bounds}
We now establish uniform norm bounds for the sample-based gradient and Hessian estimators $\vg(\sT \mid \theta)$ and $\mH(\sT \mid \theta)$, as introduced in Section~2.1. These estimators are computed using rollouts truncated at $H$ steps and are averaged over a trajectory set $\sT$. The bounds rely on the regularity assumptions stated in Assumption~2.1 and imply boundedness of the exact derivatives $\nabla J(\theta)$ and $\nabla^2 J(\theta)$ by virtue of estimator unbiasedness.

Recall that for any trajectory $\tau = (s_0, a_0, s_1, a_1, \dots)$, the log-policy accumulation and its derivatives are denoted as:
\begin{align*}
&X(t; \tau) := \sum_{k=0}^{t} \log \pi(a_k \mid s_k), \\
&\nabla X(t; \tau) = \sum_{k=0}^{t} \nabla \log \pi(a_k \mid s_k), \\
&\nabla^2 X(t; \tau) = \sum_{k=0}^{t} \nabla^2 \log \pi(a_k \mid s_k).    
\end{align*}

As defined in the main text, the estimators $\vg(\sT \mid \theta)$ and $\mH(\sT \mid \theta)$ evaluated within $H$ truncated steps over a set $\sT$ of trajectories are given by:
\begin{align*}
&\vg(\sT \mid \theta) := \frac{1}{|\sT|} \sum_{\tau \in \sT} \sum_{k=0}^{H-1} \gamma^k r_k \nabla X(k; \tau), \\
&\mH(\sT \mid \theta) \\
&:= \frac{1}{|\sT|} \sum_{\tau \in \sT} \sum_{k=0}^{H-1} \gamma^k r_k \left( \nabla^2 X(k; \tau) + \nabla X(k; \tau) \nabla X(k; \tau)^\top \right). 
\end{align*}

\paragraph{Gradient Estimator.}  
Using the assumptions $\| \nabla \log \pi(a \mid s) \| \leq G_1$ and $|r_k| \leq R_{\max}$, we obtain:
\begin{align*}
\| \vg(\sT \mid \theta) \|
&\leq \sum_{k=0}^{H-1} \gamma^k |r_k| \| \nabla X(k; \tau) \|\\
&\leq \sum_{k=0}^{H-1} \gamma^k R_{\max} (k+1) G_1 \\
&= G_1 R_{\max} \sum_{k=0}^{H-1} \gamma^k (k+1)\\
&\leq G_1 R_{\max} \sum_{k=0}^{\infty} \gamma^k (k+1)
= \frac{G_1 R_{\max}}{(1 - \gamma)^2}.
\end{align*}

\paragraph{Hessian Estimator.}  
By applying the triangle inequality and the bounds $\| \nabla^2 \log \pi(a \mid s) \| \leq G_2$, we get:
\begin{align*}
\| \mH(\sT \mid \theta) \|
&\leq \sum_{k=0}^{H-1} \gamma^k |r_k| \left( \| \nabla^2 X(k; \tau) \| + \| \nabla X(k; \tau) \|^2 \right) \\
&\leq \sum_{k=0}^{H-1} \gamma^k R_{\max} \left( G_2 (k+1) + G_1^2 (k+1)^2 \right) \\
&= G_2 R_{\max} \sum_{k=0}^{H-1} \gamma^k (k+1) + G_1^2 R_{\max} \sum_{k=0}^{H-1} \gamma^k (k+1)^2 \\
&\leq \frac{G_2 R_{\max}}{(1 - \gamma)^2} + \frac{2 G_1^2 R_{\max}}{(1 - \gamma)^3}.
\end{align*}

\paragraph{Implication for True Gradient and Hessian.}  
Since the estimators $\vg(\sT \mid \theta)$ and $\mH(\sT \mid \theta)$ are unbiased, i.e., 
\[
\mathbb{E}[\vg(\sT \mid \theta)] = \nabla J(\theta), \quad
\mathbb{E}[\mH(\sT \mid \theta)] = \nabla^2 J(\theta),
\]
we conclude via Jensen's inequality:
\begin{align*}
\| \nabla J(\theta) \| &\leq \mathbb{E}[\| \vg(\sT \mid \theta) \|] \leq \frac{G_1 R_{\max}}{(1 - \gamma)^2}, \\
\| \nabla^2 J(\theta) \| &\leq \mathbb{E}[\| \mH(\sT \mid \theta) \|] \leq \frac{G_2 R_{\max}}{(1 - \gamma)^2} + \frac{2 G_1^2 R_{\max}}{(1 - \gamma)^3}.
\end{align*}

\subsection{Third Derivative of $J(\theta)$ and Its Norm Bound}
\label{app:third-derivative}

In this section, we derive the expression for the third-order directional derivative of the expected return objective $J(\theta)$, and establish a norm upper bound for $\nabla^3 J(\theta)[\vy]$ for any vector $\vy$ with $\|\vy\| = 1$.

\paragraph{Expression.} From the result in \cref{thm:hessian-ours}, we recall:
\[
\nabla^2 J(\theta)
= \EE\sum_{k=0}^\infty \gamma^k r_k \left( \nabla^2 X(k; \tau) + \nabla X(k; \tau) \nabla X(k; \tau)^\tr \right).
\]
Taking a directional derivative in direction $\vy$, we get:
\begin{align*}
\nabla^3 J(\theta)[\vy]
&= \nabla \left[ \nabla^2 J(\theta) \cdot \vy \right] \\
&= \EE\sum_{k=0}^\infty \gamma^k r_k \Big[
\nabla^3 X(k; \tau)[\vy]\\
&\quad+ (\vy^\tr \nabla X(k; \tau)) \nabla^2 X(k; \tau) \\
&\quad + \nabla X(k; \tau) (\nabla^2 X(k; \tau)[\vy])^\tr \\
&\quad + \nabla^2 X(k; \tau)[\vy] \nabla X(k; \tau)^\tr \\
&\quad + (\vy^\tr \nabla X(k; \tau)) \nabla X(k; \tau) \nabla X(k; \tau)^\tr
\Big].
\end{align*}
\paragraph{Simplification via Expectation.}
To bound the final cubic term 
\[
(\vy^\tr \nabla X(k; \tau)) \nabla X(k; \tau) \nabla X(k; \tau)^\tr,
\]
we observe that although its norm can grow as $\mathcal{O}((k+1)^3)$ in the worst case, its expectation can be upper bounded by $\mathcal{O}((k+1)^2)$ using the independence structure of the trajectory. Specifically,
we use the fact that the cross terms vanish in expectation. For any $i < j$, we have:
\[
\EE \left[ \nabla \log \pi(a_i \mid s_i)^\tr \nabla \log \pi(a_j \mid s_j) \right] = 0,
\]
because $\EE_{a_j \sim \pi(\cdot \mid s_j)} \nabla \log \pi(a_j \mid s_j) = 0$. Therefore:
\begin{align*}
&\EE \left[ \nabla X(k; \tau) \nabla X(k; \tau)^\tr \right]\\
&= \sum_{i=0}^k \EE \left[ \nabla \log \pi(a_i \mid s_i) \nabla \log \pi(a_i \mid s_i)^\tr \right] 
\preceq (k+1) G_1^2 \cdot \mI \quad \\
&\Rightarrow \quad \left\| \EE \left[ \nabla X(k; \tau) \nabla X(k; \tau)^\tr \right] \right\| \leq (k+1) G_1^2.
\end{align*}
Combining these observations, we obtain:
\begin{align*}
&\left\| \EE \left[ (\vy^\tr \nabla X(k; \tau)) \nabla X(k; \tau) \nabla X(k; \tau)^\tr \right] \right\|\\
&= \left\| \EE \left[ (\vy^\tr \nabla X(k; \tau)) \cdot \nabla X(k; \tau) \nabla X(k; \tau)^\tr \right] \right\| \\
&\leq \left\|\vy^\tr \nabla X(k; \tau)\right\|_{\max} \cdot \left\| \EE \left[ \nabla X(k; \tau) \nabla X(k; \tau)^\tr \right] \right\| \\
&\leq (k+1)^2 G_1^3.
\end{align*}
Thus, we replace the cubic bound $(k+1)^3 G_1^3$ with a tighter expectation-based bound of $(k+1)^2 G_1^3$.

\paragraph{Bounding Each Term.}
We now upper bound the norm of each term in $\nabla^3 J(\theta)[\vy]$ under the \cref{a-0,a-1,a-2,a-3}:
\begin{align*}
|r_k| \leq R_{\max}&, \quad
\|\nabla \log \pi(a \mid s)\| \leq G_1, \\
\|\nabla^2 \log \pi(a \mid s)\| \leq G_2&, \quad
\|\nabla^3 \log \pi(a \mid s)[\vy]\| \leq G_3.    
\end{align*}

Using the fact that $X(k; \tau)$ is the sum of $k+1$ terms of the form $\log \pi(a_i \mid s_i)$, we obtain:
\begin{align*}
&\|\nabla^3 J(\theta)[\vy]\|\\
&\leq \sum_{k=0}^\infty \gamma^k R_{\max} \cdot \Big[
(k+1) G_3
+ 3 (k+1)^2 G_1 G_2
+ (k+1)^2 G_1^3
\Big] \\
&= R_{\max} \left[
\frac{G_3}{(1 - \gamma)^2}
+ \frac{6 G_1 G_2 + 2 G_1^3}{(1 - \gamma)^3}
\right].
\end{align*}

\section{Proof of \cref{thm:main}}
\subsection{Cubic Newton Method Analysis}
\label{sec:convergence}
This subsection provides the technical lemmas and detailed convergence analysis.
The proofs rely on inequalities (\ref{eq:grad-error-bound})(\ref{eq:hess-error-bound}) and support \cref{thm:main} in the main text. The analysis builds upon the classical cubic regularization framework but extends it to the stochastic setting with variance-reduced estimators. 

Our goal is to show that the output iterate $\theta_{\text{out}}$ of VR-CR-PN satisfies the second-order stationarity conditions with high probability. The proof distinguishes two regimes based on the step size $\norm{\vh_t}$: either $\vh_t$ is large, leading to significant objective descent, or $\vh_t$ is small, indicating approximate stationarity.

Suppose the algorithm parameters—including $H$, $b_g$, $B_g$, $b_H$, $T$, $S$ and $M$—are selected as in \cref{thm:main}. Then, by combining the deviation bounds in \cref{sec:variance-reduction} (\cref{lem:combined-prob}) with the truncation bounds in \cref{sec:truncation} (\cref{lem:truncation-bias}), we conclude that with probability at least $1 - P$, the following hold for all $t$:
\begin{align}
\norm{ \vg_t - \nabla J(\theta_t) } &\leq \epsilon, \label{eq:grad-error-bound} \\
\norm{ \mH_t - \nabla^2 J(\theta_t) } &\leq \sqrt{L_3 \epsilon}. \label{eq:hess-error-bound}
\end{align}

These follow by triangle inequality:
\begin{align*}
\norm{ \vg_t - \nabla J(\theta_t) } &\leq \norm{ \vg_t - \nabla J_H(\theta_t) } + \norm{ \nabla J_H(\theta_t) - \nabla J(\theta_t) } \\
&\leq \frac{\epsilon}{2} + \frac{\epsilon}{2} = \epsilon. 
\end{align*}
\begin{align*}
&\norm{ \mH_t - \nabla^2 J(\theta_t) } \\
&\leq \norm{\mH_t - \nabla^2 J_H(\theta_t)} + \norm{\nabla^2 J_H(\theta_t) - \nabla^2 J(\theta_t)} \\
&\leq \frac{1}{2} \sqrt{L_3\epsilon} + \frac{1}{2} \sqrt{L_3\epsilon} = \sqrt{L_3\epsilon}.
\end{align*}

We first consider the case where the norm $\norm{\vh_t}$ is large. In this regime, we quantify the decrease in the true objective function $J$ based on cubic regularization theory and smoothness assumptions. We then use this quantitative bound to establish that such descent cannot persist indefinitely. This leads to the existence of an iterate with a small step norm, which will be analyzed next.

\subsubsection{Function Decrease in the Large-Step Regime}
In this subsection, we make use of the optimality conditions of the cubic subproblem, summarized in \cref{sec:o-cnm}.

First, we relate the model decrease to the actual objective value. Let $\theta_{t+1} = \theta_t + \vh_t$. Using the Taylor expansion upper bound for $J$:
\[
J(\theta_{t+1}) \leq J(\theta_t) + \NP{ \nabla J(\theta_t) }{ \vh_t } + \frac{1}{2} \vh_t^\tr \nabla^2 J(\theta_t) \vh_t + \frac{L_3}{6}\norm{\vh_t}^3.
\]
Define the deviation:
\[
\Delta_t := J(\theta_{t+1}) - \left( J(\theta_t) + m_t(\vh_t) \right).
\]
Using the estimator error bounds, we write:
\begin{align*}
\Delta_t 
&= \left( \nabla J(\theta_t) - \vg_t \right)^\tr \vh_t + \frac{1}{2} \vh_t^\tr \left( \nabla^2 J(\theta_t) - \mH_t \right) \vh_t \\
&+ \frac{L_3-M}{6}\norm{\vh_t}^3\\
&\leq \epsilon \norm{\vh_t} + \frac{\sqrt{L_3\epsilon}}{2} \norm{\vh_t}^2 + \frac{L_3-M}{6}\norm{\vh_t}^3.
\end{align*}

Combining with \eqref{eq:model-descent}, we obtain:
\begin{align*}\label{eq:actual-descent}
&J(\theta_t) - J(\theta_{t+1}) \\
&> \left(\frac{M}{4}-\frac{L_3}{6}\right) \norm{\vh_t}^3 - \epsilon \norm{\vh_t} - \frac{\sqrt{L_3\epsilon}}{2} \norm{\vh_t}^2 \\
&\geq\left(\frac{M}{4}-\frac{L_3}{6}-\frac{4L_3}{3}-\frac{2L_3}{3}\right) \norm{\vh_t}^3-\left(\frac{\epsilon^{1.5}}{3L_3^{0.5}}+\frac{\epsilon^{1.5}}{24L_3^{0.5}}\right)\\
&=\left(\frac{M}{4}-\frac{13L_3}{6}\right)\norm{\vh_t}^3-\frac{3\epsilon^{1.5}}{8L_3^{0.5}}.    
\end{align*}
If $\norm{\vh_t} > \frac{1}{2}\sqrt{\frac{\epsilon}{L_3}}$, then the final part is positive with $M = 30L_3$, and we obtain
\[
J(\theta_t) - J(\theta_{t+1}) > 
\frac{7}{24}L_3^{-0.5}\epsilon^{1.5},
\]
Since $J$ is bounded below by $J^*$ and has range at most $2L_0$, if we run the algorithm for
\[
T \geq 7L_0L_3^{0.5}\epsilon^{-1.5},
\]
then the descent bound cannot hold at all $T$ steps. Therefore, there must exist $t^* \in \{0, 1, \dots, T-1\}$ such that
\[
\norm{\vh_{t^*}} \leq \frac{1}{2}\sqrt{\frac{\epsilon}{L_3}}.
\]

We will show that the corresponding iterate $\theta_{t^*+1}$ satisfies the $\epsilon$-second-order stationarity condition in the next.

\subsubsection{Second-Order Stationarity at $\theta_{t+1}$ under Small Step}
We now show that when the step norm $\norm{\vh_t}$ is sufficiently small, the next iterate $\theta_{t+1} = \theta_t + \vh_t$ satisfies the $\epsilon$-second-order stationarity condition. The proof refers to the work of Carmon et al.\cite{carmon2019gradient}.
\begin{lemma}
Assuming inequalities (\ref{eq:grad-error-bound}) and (\ref{eq:hess-error-bound}) hold, $\vh_t$ satisfies the equation (\ref{align:cubic-subproblem}), and $M = 30L_3$, then $\theta_{t+1} = \theta_t + \vh_t$ is a $6\epsilon$-SOSP of $J$, satisfying:
\[
\| \nabla J(\theta_{t+1}) \| \leq 6\epsilon, \quad
\lambda_{\min} \left( \nabla^2 J(\theta_{t+1}) \right) \geq -9\sqrt{L_3 \epsilon}.
\]
\end{lemma}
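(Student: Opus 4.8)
The plan is to work from the first-order optimality condition \eqref{eq:cubic-optimality-gradient} and the Hessian condition \eqref{eq:cubic-optimality-hessian} of the cubic subproblem, and to transfer them to statements about the true gradient and true Hessian at $\theta_{t+1}$ by paying the estimator errors \eqref{eq:grad-error-bound}, \eqref{eq:hess-error-bound} plus the third-order Taylor remainder controlled by $L_3$ from \cref{lemma-Jall}. Throughout I will use the hypothesis $\norm{\vh_t} \le \tfrac12\sqrt{\epsilon/L_3}$ (the small-step regime just established) together with $M = 30L_3$.

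First I would bound $\norm{\nabla J(\theta_{t+1})}$. Write $\nabla J(\theta_{t+1}) = \nabla J(\theta_t + \vh_t)$ and expand to second order: $\nabla J(\theta_t+\vh_t) = \nabla J(\theta_t) + \nabla^2 J(\theta_t)\vh_t + \vr_t$, where the remainder satisfies $\norm{\vr_t} \le \tfrac{L_3}{2}\norm{\vh_t}^2$ by \eqref{lemma-J3}. Now substitute the optimality condition $\vg_t + \mH_t\vh_t = -\tfrac{M}{2}\norm{\vh_t}\vh_t$, so that
\[
\nabla J(\theta_{t+1}) = \bigl(\nabla J(\theta_t) - \vg_t\bigr) + \bigl(\nabla^2 J(\theta_t) - \mH_t\bigr)\vh_t - \tfrac{M}{2}\norm{\vh_t}\vh_t + \vr_t.
\]
Taking norms and applying \eqref{eq:grad-error-bound}, \eqref{eq:hess-error-bound}, and the remainder bound gives $\norm{\nabla J(\theta_{t+1})} \le \epsilon + \sqrt{L_3\epsilon}\,\norm{\vh_t} + \tfrac{M}{2}\norm{\vh_t}^2 + \tfrac{L_3}{2}\norm{\vh_t}^2$. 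Plugging $\norm{\vh_t} \le \tfrac12\sqrt{\epsilon/L_3}$ and $M = 30L_3$ makes each term a constant multiple of $\epsilon$: $\epsilon + \tfrac12\epsilon + \tfrac{15}{4}\cdot\tfrac14\cdot 4\epsilon\cdot\ldots$ — I would total these carefully and confirm the sum is at most $6\epsilon$.

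Next I would bound $\lambda_{\min}(\nabla^2 J(\theta_{t+1}))$ from below. Start from \eqref{eq:cubic-optimality-hessian}, $\mH_t \succeq -\tfrac{M}{2}\norm{\vh_t}\mI$. Convert to the true Hessian at $\theta_t$ using \eqref{eq:hess-error-bound}: $\nabla^2 J(\theta_t) \succeq \mH_t - \sqrt{L_3\epsilon}\,\mI \succeq -\bigl(\tfrac{M}{2}\norm{\vh_t} + \sqrt{L_3\epsilon}\bigr)\mI$. Then move from $\theta_t$ to $\theta_{t+1}$ using Lipschitzness of the Hessian, $\norm{\nabla^2 J(\theta_{t+1}) - \nabla^2 J(\theta_t)} \le L_3\norm{\vh_t}$, which follows from integrating \eqref{lemma-J3}. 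Hence $\lambda_{\min}(\nabla^2 J(\theta_{t+1})) \ge -\bigl(\tfrac{M}{2}\norm{\vh_t} + \sqrt{L_3\epsilon} + L_3\norm{\vh_t}\bigr)$; substituting the step-size bound and $M=30L_3$ yields a bound of the form $-(c_1 + 1 + c_2)\sqrt{L_3\epsilon}$, which I would verify is at least $-9\sqrt{L_3\epsilon}$.

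The only real bookkeeping obstacle is making sure the constants actually close: each of the three sources of error (estimator deviation, cubic regularization penalty $\tfrac{M}{2}\norm{\vh_t}$, and Taylor/Lipschitz remainder) must be expressed as an explicit multiple of $\epsilon$ or $\sqrt{L_3\epsilon}$ using $\norm{\vh_t}^2 \le \epsilon/(4L_3)$ and $\norm{\vh_t} \le \tfrac12\sqrt{\epsilon/L_3}$, and then summed against the target constants $6$ and $9$. There is no conceptual difficulty — it is the standard Carmon–Duchi argument for inexact cubic regularization — but I would be careful that the $\tfrac{M}{2}\norm{\vh_t}^2$ term in the gradient bound, with $M=30L_3$ and $\norm{\vh_t}^2\le\epsilon/(4L_3)$, contributes exactly $\tfrac{30}{8}\epsilon = 3.75\epsilon$, so that together with $\epsilon + \tfrac12\epsilon + \tfrac18\epsilon$ the total is $5.375\epsilon \le 6\epsilon$, and similarly track the $\lambda_{\min}$ constants to confirm $15\sqrt{\cdot}/2 \cdot \ldots$ stays under $9$.
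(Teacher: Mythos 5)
Your proposal is correct and follows essentially the same route as the paper: both use the cubic subproblem optimality conditions \eqref{eq:cubic-optimality-gradient}--\eqref{eq:cubic-optimality-hessian}, transfer to the true gradient and Hessian via the error bounds \eqref{eq:grad-error-bound}, \eqref{eq:hess-error-bound} and the $L_3$-controlled Taylor/Lipschitz remainders, and then verify the constants under $\norm{\vh_t}\le\tfrac12\sqrt{\epsilon/L_3}$ and $M=30L_3$ (giving $5.375\epsilon\le 6\epsilon$ and exactly $9\sqrt{L_3\epsilon}$, matching the paper's bookkeeping).
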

\begin{proof}
Using the Taylor expansion of the gradient upper bound:
\[
\norm{ \nabla J(\theta_{t+1}) } \leq \norm{ \nabla J(\theta_t) + \nabla^2 J(\theta_t) \vh_t }+\frac{L_3}{2}\norm{\vh_t}^2.
\]
From the first-order optimality condition of the cubic subproblem (\ref{eq:cubic-optimality-gradient}), we recall:
\[
\vg_t + \mH_t \vh_t + \frac{M}{2} \norm{\vh_t} \vh_t = \bs{0},
\]
which implies
\begin{align*}
    &\nabla J(\theta_t)+\nabla^2 J(\theta_t)\vh_t \\
    &= - \frac{M}{2} \norm{\vh_t} \vh_t + (\nabla J(\theta_t) - \vg_t) + (\nabla^2 J(\theta_t)-\mH_t) \vh_t.
\end{align*}

Substituting:
\[
\norm{ \nabla J(\theta_{t+1}) } \leq \frac{M+L_3}{2} \norm{\vh_t}^2 + \epsilon + \sqrt{L_3\epsilon} \norm{\vh_t}.
\]
When $\norm{\vh_t} \leq \frac{1}{2}\sqrt{\frac{\epsilon}{L_3}}$ holds, combining it with $M=30L_3$ yields:
\[
\norm{ \nabla J(\theta_{t+1}) } \leq 6\epsilon.
\]

To analyze the Hessian, we use:
\[
\norm{ \nabla^2 J(\theta_{t+1}) - \nabla^2 J(\theta_t) } \leq L_3 \norm{\vh_t}.
\]
Hence, 
\[
\lambda_{\min}( \nabla^2 J(\theta_{t+1}) ) \geq \lambda_{\min}( \nabla^2 J(\theta_t) ) - L_3 \norm{\vh_t},
\]
and further,
\[
\lambda_{\min}( \nabla^2 J(\theta_{t+1}) ) \geq \lambda_{\min}( \mH_t ) - \sqrt{L_3\epsilon} - L_3 \norm{\vh_t}.
\]
From (\ref{eq:cubic-optimality-hessian}) we have:
\[
\lambda_{\min}( \mH_t ) \geq -\frac{M}{2} \norm{\vh_t},
\]
so we obtain:
\[
\lambda_{\min}( \nabla^2 J(\theta_{t+1}) ) \geq -\left( \frac{M}{2} + L_3 \right) \norm{\vh_t} - \sqrt{L_3\epsilon} \geq -9\sqrt{L_3\epsilon}.
\]
We conclude that $\theta_{t+1}$ satisfies the $\epsilon$-second-order stationarity conditions as defined in \cref{def:SOSP}:
\[
\norm{ \nabla J(\theta_{t+1}) } \leq 6\epsilon, \quad \lambda_{\min}( \nabla^2 J(\theta_{t+1}) )
\geq -9\sqrt{L_3\epsilon}.
\]
\end{proof}
\subsection{Variance Reduction Analysis}
\label{sec:variance-reduction}
In this subsection, our goal is to bound the deviation between the estimated gradients and Hessians and their true expectations. 
\paragraph{Estimator Definitions.}  
At each iteration $t$, we compute stochastic estimates based on sampled trajectories. Let $\tau_{t,i}$ denote the $i$-th trajectory sampled at time $t$. Recall that $X(t; \tau)$ denotes the cumulative log-policy up to time $t$. 
We define:
\begin{itemize}
  \item $\va_{t,i} := g(\theta_t \mid \tau_{t,i})$: single-sample gradient estimate at snapshot $\theta_t$;
  \item $\vb_{t,i} := H(\theta_t' \mid \tau_{t,i}) \vh_{t-1}$: single-sample correction term based on the previous step;
  \item $\mC_{t,i} := H(\theta_t \mid \tau_{t,i})$: single-sample estimate of the Hessian at $\theta_t$.
\end{itemize}
These estimators satisfy the following unbiasedness and bounded deviation properties:
\begin{itemize}
    \item $\EE[\va_t] = \nabla J_H(\theta_t)$, \quad and $\|\va_t - \EE \va_t\| \leq 2L_1$;
    \item $\EE[\vb_t] = \nabla J_H(\theta_t) - \nabla J_H(\theta_{t-1})$, \quad and $\|\vb_t - \EE \vb_t\| \leq 2L_2 \| \vh_{t-1} \|$;
    \item $\EE[\mC_t] = \nabla^2 J_H(\theta_t)$, \quad and $\|\mC_t - \EE \mC_t\| \leq 2L_2$.
\end{itemize}
The full estimators are then given by averages over i.i.d. samples of these quantities.

\begin{lemma}(Gradient Estimator Bound)
\label{lem:gradient-concentration}
Fix $P \in (0, 1)$. Suppose the sample sizes satisfy
\begin{align*}
b_g &\geq 2592L_1^2 [\log(3T/P)]^2\epsilon^{-2}, \\
B_g &\geq 2592L_2^2 [\log(3T/P)]^2S\epsilon^{-2}.    
\end{align*}
Then, with probability at least $1 - (S+1)P/3T$, we have
\[
\| \vg_t - \nabla J_H(\theta_t) \| \leq \frac{\epsilon}{2}.
\]
\end{lemma}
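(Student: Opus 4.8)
The plan is to prove Lemma~\ref{lem:gradient-concentration} by a telescoping argument over the current inner cycle, handling the two types of gradient updates separately and then combining their deviations with a union bound. Let $t_0 := S\lfloor t/S\rfloor$ be the most recent snapshot iteration, so that $\vg_{t_0} = g(\theta_{t_0}\mid\sT_g)$ is the large-batch estimate and, for $t_0 < s \leq t$, the recursion $\vg_s = \vg_{s-1} + \frac{1}{b_{g,s}'}\sum_{b} \vb_{s,b}$ holds with $\EE[\vb_{s,b}] = \nabla J_H(\theta_s) - \nabla J_H(\theta_{s-1})$. Telescoping yields
\[
\vg_t - \nabla J_H(\theta_t) = \Bigl(\vg_{t_0} - \nabla J_H(\theta_{t_0})\Bigr) + \sum_{s=t_0+1}^{t} \Bigl(\tfrac{1}{b_{g,s}'}\sum_{b=0}^{b_{g,s}'-1}\vb_{s,b} - \EE\vb_{s,b}\Bigr).
\]
So it suffices to bound the snapshot error by $\epsilon/4$ and the accumulated correction error by $\epsilon/4$, each with appropriate probability.

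First I would bound the snapshot term. Since $\vg_{t_0}$ is an average of $b_g$ i.i.d. copies of $\va_{t_0,i}$ with $\|\va_{t_0,i} - \EE\va_{t_0,i}\| \leq 2L_1$, the vector Azuma inequality (\cref{lem:vector-azuma}) gives, with probability at least $1 - P/3T$,
\[
\|\vg_{t_0} - \nabla J_H(\theta_{t_0})\| \leq \frac{3}{b_g}\sqrt{\log(3T/P)\, b_g (2L_1)^2} = \frac{6L_1\sqrt{\log(3T/P)}}{\sqrt{b_g}} \leq \frac{\epsilon}{4},
\]
where the last step uses $b_g \geq 2592 L_1^2[\log(3T/P)]^2\epsilon^{-2}$ (indeed $6/\sqrt{2592} = 6/(36\sqrt 2) < 1/8$, with room to spare). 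Next, for the correction term, I would view the doubly-indexed family $\{\vb_{s,b} - \EE\vb_{s,b} : t_0 < s \leq t,\ 0 \leq b < b_{g,s}'\}$ as a single martingale difference sequence adapted to the natural filtration (conditioning on all randomness up to the sampling of each trajectory), noting $\EE[\vb_{s,b}\mid\text{past}] = \nabla J_H(\theta_s) - \nabla J_H(\theta_{s-1})$ is measurable with respect to the past since $\theta_s,\theta_{s-1}$ are already determined. The per-term bound is $\|\vb_{s,b} - \EE\vb_{s,b}\| \leq 2L_2\|\vh_{s-1}\|$, and since $b_{g,s}' = \lceil B_g\|\vh_{s-1}\|^2\rceil \geq B_g\|\vh_{s-1}\|^2$, the sum of squared bounds over the cycle is
\[
\sum_{s=t_0+1}^{t}\sum_{b=0}^{b_{g,s}'-1}\frac{(2L_2\|\vh_{s-1}\|)^2}{(b_{g,s}')^2} = \sum_{s}\frac{4L_2^2\|\vh_{s-1}\|^2}{b_{g,s}'} \leq \sum_{s}\frac{4L_2^2}{B_g} \leq \frac{4L_2^2 S}{B_g},
\]
using that the cycle has at most $S$ steps. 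Applying \cref{lem:vector-azuma} once more gives, with probability at least $1 - P/3T$,
\[
\Bigl\|\sum_{s,b}\tfrac{1}{b_{g,s}'}(\vb_{s,b}-\EE\vb_{s,b})\Bigr\| \leq 3\sqrt{\log(3T/P)\cdot\frac{4L_2^2 S}{B_g}} = \frac{6L_2\sqrt{S\log(3T/P)}}{\sqrt{B_g}} \leq \frac{\epsilon}{4},
\]
by the assumed lower bound on $B_g$.

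Combining the two bounds by the triangle inequality gives $\|\vg_t - \nabla J_H(\theta_t)\| \leq \epsilon/2$. The total failure probability is at most $2\cdot P/3T$ per cycle; but more carefully, within one cycle there is one snapshot event and the correction event applies to all $t$ in the cycle simultaneously (it is a running maximum over the martingale), so the cycle uses failure budget $2P/3T$, and since there are at most $S+1$ iterations that reference a given cycle's snapshot, a union bound over those yields the stated probability $1 - (S+1)P/3T$.

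\textbf{Main obstacle.} The delicate point is the martingale/measurability bookkeeping for the correction term: the $\alpha_b$ and the single-trajectory samples at the interpolated points $\theta_{s,b}' = \theta_s - \alpha_b\vh_{s-1}$ must be organized so that each $\vb_{s,b}$ is a genuine martingale difference after centering, and one must confirm that $\EE[\vb_{s,b}\mid\mathcal{F}_{s,b-1}]$ equals $\nabla J_H(\theta_s) - \nabla J_H(\theta_{s-1})$ — this relies on $\EE_{\alpha\sim U(0,1)}\nabla^2 J_H(\theta_s - \alpha\vh_{s-1})\vh_{s-1} = \nabla J_H(\theta_s) - \nabla J_H(\theta_{s-1})$, i.e., the exact Taylor/fundamental-theorem-of-calculus identity for $J_H$, combined with unbiasedness of $H(\cdot\mid\tau)$ for $\nabla^2 J_H$. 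A secondary subtlety is that $b_{g,s}'$ is itself random (it depends on $\|\vh_{s-1}\|$, hence on past randomness), so the Azuma sequence has a random but predictable length; this is handled by noting the length is $\mathcal{F}_{s-1}$-measurable and the sum-of-squares bound $4L_2^2S/B_g$ holds deterministically regardless of the realized lengths. Everything else is routine substitution of the stated sample-size choices.
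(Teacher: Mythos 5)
Your proposal is correct and follows essentially the same route as the paper: the same snapshot-plus-Hessian-correction decomposition, the vector Azuma inequality of \cref{lem:vector-azuma} with per-sample bounds $2L_1$ and $2L_2\|\vh_{s-1}\|$, the cancellation via $b_{g,s}'\geq B_g\|\vh_{s-1}\|^2$, and the FTC-based unbiasedness of the correction terms. The only (harmless) difference is bookkeeping: you apply Azuma once to the entire correction martingale over the cycle and combine with a triangle inequality, giving failure probability $2P/3T$ per iterate, whereas the paper applies it per correction step and re-aggregates, which is why its stated budget is $(S+1)P/3T$; your bound implies the stated one.
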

\begin{proof}
Let $t_s = t - t \bmod S$. By \cref{lem:gradient-concentration}, we have:
\begin{align}\label{ineq:a-azuma}
\|\vg_{t_s} - \nabla J_H(\theta_{t_s})\|
&= \frac{1}{b_g} \left\| \sum_{i=1}^{b_g} (\va_{t_s,i} - \EE \va_{t_s,i}) \right\| \notag\\
\leq \sqrt{ \frac{36 L_1^2 \log(3T/P) }{b_g} }
&\leq \frac{\epsilon}{\sqrt{72 \log(3T/P)}}.
\end{align}
Similarly, for each correction step $t_j \in (t_s, t]$:
\begin{align}\label{ineq:b-azuma}
&\| (\vg_{t_j} - \vg_{t_{j-1}}) - (\nabla J_H(\theta_{t_j}) - \nabla J_H(\theta_{t_{j-1}})) \| \notag\\
&= \frac{1}{b_{g,t_j}'} \left\| \sum_{i=1}^{b_{g,t_j}'} (\vb_{t_j,i} - \EE \vb_{t_j,i}) \right\|
\leq \sqrt{ \frac{36 L_2^2 \log(3T/P) }{B_g} } \notag\\
&\leq \frac{\epsilon}{\sqrt{72 \log(3T/P) S}}.
\end{align}
Aggregating over all terms and applying union bound:
\begin{align}\label{ineq:g-azuma}
&\|\vg_t - \nabla J_H(\theta_t)\| \notag\\
&\leq 3\sqrt{\log(3T/P) \left( \frac{\epsilon^2}{72 \log(3T/P)} + \frac{\epsilon^2 (t - t_s)}{72 \log(3T/P) S} \right)} \leq \frac{\epsilon}{2}.
\end{align}

Note that this bound holds uniformly over all $t \in \{0, \dots, T-1\}$ with probability at least $1 - 2P/3$ instead of $1-(S+1)P/3$, because the estimators (\ref{ineq:g-azuma}) at different time steps share a large amount of common randomness (\ref{ineq:a-azuma}), (\ref{ineq:b-azuma}).
\end{proof}

\begin{lemma}(Hessian Estimator Bound)
\label{lem:hessian-concentration}
Fix $P \in (0, 1)$. Suppose the sample size satisfies
\[
b_H \geq 144 L_2^2 L_3^{-1} \log(3Td/P) \epsilon^{-1}.
\]
Then with probability at least $1 - P/3T$, we have
\[
\| \mH_t - \nabla^2 J_H(\theta_t) \| \leq \frac{1}{2} \sqrt{L_3 \epsilon}.
\]
\end{lemma}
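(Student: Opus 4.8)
The plan is to mirror the proof of \cref{lem:gradient-concentration}, replacing the vector Azuma inequality by its matrix analogue \cref{lem:matrix-azuma}, since $\mH_t$ is simply an empirical average of i.i.d.\ self-adjoint single-sample Hessian estimators. First I would write $\mH_t = H(\theta_t\mid\sT_H) = \frac{1}{b_H}\sum_{i=1}^{b_H}\mC_{t,i}$ with $\mC_{t,i} = H(\theta_t\mid\tau_{t,i})$ as in \eqref{esti-H}, where the $\tau_{t,i}$ are drawn i.i.d.\ from $\Pr(\cdot\mid\theta_t)$. Each $\mC_{t,i}$ is unbiased for $\nabla^2 J_H(\theta_t)$ and is symmetric by construction (it is assembled from $\nabla^2 X(k;\tau)$ and the rank-one terms $\nabla X(k;\tau)\nabla X(k;\tau)^\tr$). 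The single-sample bound $\|\mC_{t,i}\|\le L_2$ from \cref{lemma-Jall} applied with $|\sT|=1$, together with $\|\nabla^2 J_H(\theta_t)\|\le L_2$ (same estimate, via Jensen), yields the centered a.s.\ bound $\|\mC_{t,i}-\EE\mC_{t,i}\|\le 2L_2$.

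Next I would set $\mX_i := \frac{1}{b_H}\big(\mC_{t,i}-\EE\mC_{t,i}\big)$; these are centered, self-adjoint, and (being i.i.d.) trivially form a martingale difference sequence, and moreover $\mX_i^2 \preceq \|\mX_i\|^2\mI \preceq (2L_2/b_H)^2\mI =: \mA_i^2$. Applying \cref{lem:matrix-azuma} with $\delta = P/3T$, so that $\log(d/\delta)=\log(3Td/P)$, gives with probability at least $1-P/3T$
\[
\|\mH_t - \nabla^2 J_H(\theta_t)\| = \Big\|\sum_{i=1}^{b_H}\mX_i\Big\| \le 3\sqrt{\log(3Td/P)\cdot b_H\cdot\frac{4L_2^2}{b_H^2}} = \frac{6L_2\sqrt{\log(3Td/P)}}{\sqrt{b_H}}.
\]
Finally I would substitute the hypothesised lower bound $b_H \ge 144 L_2^2 L_3^{-1}\log(3Td/P)\epsilon^{-1}$, which is precisely what makes the right-hand side at most $\tfrac12\sqrt{L_3\epsilon}$, completing the argument.

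There is no real obstacle here; the only points requiring a moment of care are (i) checking that the matrix estimator is genuinely self-adjoint so that \cref{lem:matrix-azuma} is applicable, (ii) deducing the semidefinite domination $\mX_i^2\preceq\mA_i^2$ from the operator-norm bound, and (iii) keeping the logarithmic factor bookkeeping straight, namely $\log(d/\delta)=\log(3Td/P)$. The per-iteration guarantee $1-P/3T$ is subsequently upgraded to a uniform-over-$t$ statement by a union bound over the $T$ iterations (contributing probability mass $P/3$), and is then combined with \cref{lem:gradient-concentration} and the truncation estimate to produce the overall $1-P$ bound used in \cref{thm:main}.
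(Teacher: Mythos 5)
Your proposal is correct and follows essentially the same route as the paper: write $\mH_t$ as an average of the centered single-sample estimators $\mC_{t,i}-\EE\mC_{t,i}$, bound each deviation by $2L_2$, apply the matrix Azuma inequality (\cref{lem:matrix-azuma}) with $\delta=P/3T$ to get the $6L_2\sqrt{\log(3Td/P)/b_H}$ bound, and plug in the prescribed $b_H$ to obtain $\tfrac12\sqrt{L_3\epsilon}$. Your added care about self-adjointness, the domination $\mX_i^2\preceq\mA_i^2$, and the explicit $2L_2$ centering bound only makes explicit what the paper leaves implicit (the paper's "apply \cref{lem:hessian-concentration}" is evidently a typo for the matrix Azuma lemma).
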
    

\begin{proof}
We apply \cref{lem:hessian-concentration} to the deviation sum:
\begin{align*}
&\|\mH_t - \nabla^2 J_H(\theta_t)\| = \frac{1}{b_H} \left\| \sum_{i=1}^{b_H} (\mC_{t,i} - \EE \mC_{t,i}) \right\|\\
&\leq \sqrt{ \frac{36 L_2^2 \log(3Td/P)}{b_H} } \leq \frac{1}{2} \sqrt{L_3 \epsilon}.
\end{align*}    
\end{proof}

\begin{lemma}
\label{lem:combined-prob}
Assuming that the values of $b_g$, $B_g$, and $b_H$ satisfy the conditions specified in \cref{thm:main}, then, with probability at least $1 - P$, we have
\begin{align*}
    \| \vg_t - \nabla J_H(\theta_t) \| \leq \frac{\epsilon}{2}, \quad
    \| \mH_t - \nabla^2 J_H(\theta_t) \| \leq \frac{1}{2} \sqrt{L_3 \epsilon}
\end{align*}
for all time steps $t$.
\end{lemma}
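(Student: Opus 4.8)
The plan is to derive \cref{lem:combined-prob} purely as a union bound that combines the two single-iteration concentration results proved above, \cref{lem:gradient-concentration} for $\vg_t$ and \cref{lem:hessian-concentration} for $\mH_t$, after promoting each to hold \emph{simultaneously} over all $t\in\{0,\dots,T-1\}$. Since at every iteration Algorithm~\ref{alg:VR-CR-PN} draws the gradient batches ($\sT_g$, or the $\sT_{g,b}'$) and the Hessian batch $\sT_H$ from fresh, mutually independent trajectories, the two ``for all $t$'' events can be intersected at the cost of merely adding their failure probabilities, so the whole task reduces to accounting carefully for how many elementary Azuma events each one rests on and checking that the prescribed batch sizes $b_g$, $B_g$, $b_H$ in \cref{thm:main} make each such event fail with probability at most $P/3T$.

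The Hessian half is immediate. For a fixed $t$, \cref{lem:hessian-concentration} states that $\|\mH_t-\nabla^2 J_H(\theta_t)\|\le\tfrac12\sqrt{L_3\epsilon}$ is a single matrix-Azuma event failing with probability at most $P/3T$; since the $\sT_H$ drawn at distinct iterations are independent, a plain union bound over the $T$ iterations gives the bound simultaneously for all $t$ with probability at least $1-P/3$.

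The gradient half is where care is needed: the per-$t$ guarantee of \cref{lem:gradient-concentration} fails with probability of order $(S+1)P/3T$, so a naive union bound over the $T$ iterations would only yield failure probability $\Theta(SP)$, which is vacuous. The fix — already contained in the proof of \cref{lem:gradient-concentration} — is to observe that $\vg_t-\nabla J_H(\theta_t)$ is a deterministic function of a small shared pool of batches: within a cycle of length $S$ it is the telescoped sum of the snapshot deviation at $t_s:=t-(t\bmod S)$ and the correction deviations at the steps of $(t_s,t]$, each of which is a single vector-Azuma quantity over its own batch (equivalently, one may regard $\vg_t-\nabla J_H(\theta_t)$ as a single sum of all its constituent per-trajectory increments and apply \cref{lem:vector-azuma} once). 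Either way, the ``for all $t$'' gradient event is controlled by the joint validity of only the $\lceil T/S\rceil$ snapshot bounds and the at most $T$ correction bounds — an $O(T)$-sized collection, not an $O(TS)$-sized one — so a union bound over those $O(T)$ events, each failing with probability at most $P/3T$, secures the gradient bound for all $t$ with probability at least $1-2P/3$, exactly the uniform claim recorded at the end of the proof of \cref{lem:gradient-concentration}.

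Intersecting the two uniform events by one last union bound yields the conclusion with probability at least $1-2P/3-P/3=1-P$. The only genuine obstacle is the bookkeeping in the gradient step: one must confirm that the cumulative estimator $\vg_t$ built by the inner loop is measurable with respect to precisely the current cycle's batches — so that controlling those batches controls every iterate in the cycle at once — and that the telescoping identity behind \eqref{ineq:g-azuma} holds exactly, which in turn rests on the Hessian-aided correction being unbiased for the gradient increment, $\EE[H(\theta_{j,b}'\mid\sT_{g,b}')\vh_{j-1}]=\nabla J_H(\theta_j)-\nabla J_H(\theta_{j-1})$, obtained from the Taylor integral for $\nabla J_H(\theta_j)-\nabla J_H(\theta_{j-1})$ by the change of variables $\alpha\mapsto1-\alpha$.
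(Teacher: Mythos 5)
Your proposal is correct and is essentially the paper's own argument: \cref{lem:combined-prob} is proved there by precisely this union bound, combining the uniform-over-$t$ gradient guarantee of \cref{lem:gradient-concentration} (failure at most $2P/3$, via the shared-randomness remark closing that proof) with the per-step Hessian bound of \cref{lem:hessian-concentration} (failure $P/(3T)$ each, hence $P/3$ over $T$ steps), giving $1-2P/3-P/3=1-P$. One caution on your gradient bookkeeping: the uniform bound is \emph{not} a deterministic consequence of intersecting the per-batch snapshot and correction norm events — summing those bounds by the triangle inequality yields only order $\sqrt{S}\,\epsilon$, which exceeds $\epsilon/2$ since $S\sim\epsilon^{-1/2}$ — so the argument must run through the route you mention parenthetically, a single vector-Azuma application per $t$ to the concatenated per-trajectory increments (this is exactly \eqref{ineq:g-azuma}), with the union bound then taken over those $O(T)$ aggregated events, each failing with probability at most $P/(3T)$.
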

\begin{proof}
Combining \cref{lem:gradient-concentration} and \cref{lem:hessian-concentration}, we conclude that \cref{lem:combined-prob} hold with probability no less than $1 - \frac{2P}{3} - T\cdot\frac{P}{3T} = 1 - P$.    
\end{proof}

\subsection{Truncation Analysis}
\label{sec:truncation}
In this subsection, we quantify the error introduced by using the truncated surrogate objective $J_H(\theta)$ in place of the full return objective $J(\theta)$ in the gradient and Hessian estimates.

\begin{lemma}
\label{lem:truncation-bias}
Let $\Gamma_1 := \frac{1}{\log(1/\gamma)}$, $\Gamma_2 := \frac{1}{1 - \gamma}$. If the truncation horizon $H$ satisfies
\begin{align*}
H \geq \max\Big\{ &
2\Gamma_1 \log\left(4 \Gamma_1 \Gamma_2 G_1 R_{\max} \epsilon^{-1}\right) + \Gamma_2, \\
&2\Gamma_1 \log\left(8 \Gamma_1 \Gamma_2 G_2 R_{\max} L_3^{-1/2} \epsilon^{-1/2}\right) + \Gamma_2, \\
&2\Gamma_1 \log\left(64 \Gamma_1^2 \Gamma_2 G_1^2 R_{\max} L_3^{-1/2} \epsilon^{-1/2}\right) + 6\Gamma_2
\Big\},
\end{align*}
then the following bounds hold:
\begin{align*}
\norm{ \nabla J_H(\theta) - \nabla J(\theta) } &\leq \frac{\epsilon}{2}, \\
\norm{ \nabla^2 J_H(\theta) - \nabla^2 J(\theta) } &\leq \frac{1}{2}\sqrt{L_3 \epsilon}.
\end{align*}
\end{lemma}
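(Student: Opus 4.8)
The plan is to bound the two truncation errors separately by writing each as a tail sum over time steps $k \geq H$ of the corresponding per-step contribution, and then showing that geometric decay in $\gamma^k$ dominates the polynomial growth in $k$. For the gradient, we have $\nabla J(\theta) - \nabla J_H(\theta) = \EE\sum_{k=H}^{\infty} \gamma^k r_k \nabla X(k;\tau)$, so by \cref{a-0,a-1} and $\|\nabla X(k;\tau)\| \leq (k+1)G_1$ we get $\|\nabla J_H(\theta) - \nabla J(\theta)\| \leq G_1 R_{\max} \sum_{k=H}^{\infty} \gamma^k (k+1)$. The analogous computation for the Hessian, using \cref{thm:hessian-ours} and the bounds from \cref{app:grad-hess-bounds}, gives $\|\nabla^2 J_H(\theta) - \nabla^2 J(\theta)\| \leq R_{\max}\sum_{k=H}^{\infty}\gamma^k\bigl(G_2(k+1) + G_1^2(k+1)^2\bigr)$.

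The core technical step is a clean tail estimate of the form $\sum_{k=H}^{\infty}\gamma^k(k+1)^p \leq C_p \cdot \gamma^{H} \cdot (\text{polynomial in } H, \Gamma_2)$ for $p = 1, 2$. I would prove this by splitting $(k+1)^p = ((k+1-H) + H)^p$, expanding, and bounding each resulting sum $\sum_{k=H}^{\infty}\gamma^k (k-H)^j$ by $\gamma^H \sum_{j'=0}^{\infty}\gamma^{j'}(j')^j = \gamma^H \cdot \mathcal{O}(\Gamma_2^{j+1})$. This yields bounds like $\sum_{k=H}^{\infty}\gamma^k(k+1) \leq \gamma^H(H\Gamma_2 + \Gamma_2^2) \leq \gamma^H \cdot 2H\Gamma_2$ (for $H \geq \Gamma_2$), and $\sum_{k=H}^{\infty}\gamma^k(k+1)^2 \lesssim \gamma^H H^2 \Gamma_2$ under a similar condition. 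Crucially, $\gamma^H = e^{-H\log(1/\gamma)} = e^{-H/\Gamma_1}$, so requiring $H \geq 2\Gamma_1 \log(\cdot)$ makes $\gamma^{H}$ roughly the reciprocal of the \emph{square} of the target bound, which leaves ample room to absorb the polynomial factor $H$ — this is exactly why the factor of $2$ and the extra additive $\Gamma_2$ (or $6\Gamma_2$ for the quadratic term) appear in the hypothesis on $H$. I would invoke \cref{lem:log-ineq} (or a direct version of it) to formalize "the polynomial prefactor is dominated once $H$ exceeds a logarithmic threshold."

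Concretely, for the gradient bound I want $G_1 R_{\max}\gamma^H \cdot 2H\Gamma_2 \leq \epsilon/2$; taking $H \geq 2\Gamma_1\log(4\Gamma_1\Gamma_2 G_1 R_{\max}\epsilon^{-1}) + \Gamma_2$ and using $\gamma^H \leq e^{-H/\Gamma_1}$ together with $He^{-H/(2\Gamma_1)} \leq 2\Gamma_1$ (a standard $xe^{-x}$-type inequality, valid since $H \geq \Gamma_2$ keeps us in the decreasing regime) gives the result. The Hessian bound splits into the $G_2(k+1)$ piece — handled identically to the gradient, explaining the second term in the max — and the $G_1^2(k+1)^2$ piece, which needs the tail sum of $\gamma^k(k+1)^2$; here the polynomial is quadratic in $H$, so I pull out $\gamma^{H/2}$ to kill $H^2$ (needing $H \geq 6\Gamma_2$ to stay in the monotone regime, which accounts for the $6\Gamma_2$ offset and the $\Gamma_1^2$ inside the log) and the remaining $\gamma^{H/2}$ against the logarithmic threshold. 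Summing the $G_2$ and $G_1^2$ contributions each below $\frac14\sqrt{L_3\epsilon}$ yields the stated $\frac12\sqrt{L_3\epsilon}$.

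The main obstacle is purely bookkeeping: getting the constants in the three-way max to line up exactly with the crude-but-correct tail estimates, since small differences in how one bounds $\sum \gamma^k(k+1)^p$ (e.g. whether one writes $(k+1) \leq 2(k+1-H)$ once $k \geq 2H$, or carries the $H$ term explicitly) propagate into the numerical constants $4, 8, 64$ and the $\Gamma_2$ versus $6\Gamma_2$ offsets. There is no conceptual difficulty — geometric series beat polynomials — but one must be careful that each inequality used ($xe^{-x} \leq$ const, $\sum \gamma^j j^p = \mathcal{O}(\Gamma_2^{p+1})$, and the $\log$-threshold lemma) is applied in a regime where it is valid, which is precisely what the additive $\Gamma_2$ and $6\Gamma_2$ terms in the hypothesis guarantee.
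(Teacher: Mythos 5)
Your proposal is correct and follows essentially the same route as the paper: bound each truncation error by the tail sum $\sum_{k\ge H}\gamma^k$ times the polynomial per-step bounds $(k+1)G_1$, $G_2(k+1)+G_1^2(k+1)^2$, evaluate (or bound) these tails as $\gamma^H$ times a polynomial in $H$ and $\Gamma_2$, and then show the logarithmic threshold on $H$ suffices via \cref{lem:log-ineq}, splitting the Hessian requirement into the $G_2$ and $G_1^2$ pieces each bounded by $\tfrac{1}{4}\sqrt{L_3\epsilon}$. The only difference is mechanical (you split $\gamma^{H}=\gamma^{H/2}\gamma^{H/2}$ and use an $xe^{-x}$ bound, while the paper shifts to $K=H+\Gamma_2$ or $K=H+2\Gamma_2$ before applying the log-inequality), which only affects the bookkeeping of the numerical constants you already flagged.
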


We first bound the gradient bias due to truncation:
\begin{align*}
&\norm{ \nabla J_H(\theta) - \nabla J(\theta) } \\
&= \left\| \EE \sum_{k=H}^\infty \gamma^k r_k \sum_{k'=0}^{k} \nabla_\theta \log \pi(a_{k'} | s_{k'}) \right\| \\
&\leq \sum_{k=H}^\infty \gamma^k R_{\max} (k+1) G_1 
= G_1 R_{\max} \sum_{k=H}^\infty \gamma^k (k + 1) \\
&= G_1 R_{\max} \gamma^H \sum_{k'=0}^\infty \gamma^{k'} (H + k' + 1) \\
&= G_1 R_{\max} \frac{\gamma^H}{1 - \gamma} \left( H + \frac{1}{1 - \gamma} \right).
\end{align*}

To ensure this is less than $\epsilon/2$, we rearrange:
\begin{align*}
2G_1 R_{\max} \Gamma_2\gamma^{-\Gamma_2}\epsilon^{-1} (H + \Gamma_2)
&\leq \gamma^{-(H + \Gamma_2)} \\
\iff
\Gamma_1 \log\left(2G_1 R_{\max} \Gamma_2\gamma^{-\Gamma_2}\epsilon^{-1} (H + \Gamma_2) \right)
&\leq H + \Gamma_2,
\end{align*}
It follows from \cref{lem:log-ineq} that the above inequality holds under the following conditions: (note that $\Gamma_1 \leq \Gamma_2$)
\[
H \geq 2\Gamma_1 \log\left( 4\Gamma_1 \Gamma_2 G_1 R_{\max} \epsilon^{-1} \right) + \Gamma_2.
\]

We now turn to the Hessian bias:
\begin{align*}
&\norm{ \nabla^2 J_H(\theta) - \nabla^2 J(\theta) }\\
&\leq \sum_{k = H}^\infty \gamma^k R_{\max} \left[ G_2(k+1) + G_1^2(k+1)^2 \right] \\
&= G_2 R_{\max} \sum_{k=H}^\infty \gamma^k (k + 1) + G_1^2 R_{\max} \sum_{k=H}^\infty \gamma^k (k + 1)^2 \\
&= G_2 R_{\max} \frac{\gamma^H}{1 - \gamma} \left( H + \frac{1}{1 - \gamma} \right) \\
&\quad+ G_1^2 R_{\max} \frac{\gamma^H}{1 - \gamma} \left( H^2 + \frac{2H - 1}{1 - \gamma} + \frac{2}{(1 - \gamma)^2} \right).
\end{align*}

We require both terms to be bounded by $\sqrt{L_3 \epsilon} / 4$.

\paragraph{Term 1}
We require
\[
G_2R_{\max}\Gamma_2\gamma^H(H+\Gamma_2)\leq\tfrac{1}{4} \sqrt{L_3 \epsilon}
\]
which is equivalent to:
\[
\Gamma_1 \log\left( \frac{4G_2 R_{\max} \Gamma_2}{L_3^{1/2} \epsilon^{1/2} \gamma^{\Gamma_2}} (H + \Gamma_2) \right)
\leq H + \Gamma_2,
\]
so it suffices to choose
\[
H \geq 2\Gamma_1 \log\left( 8\Gamma_1 \Gamma_2 G_2 R_{\max} L_3^{-1/2} \epsilon^{-1/2} \right) + \Gamma_2.
\]

\paragraph{Term 2}
We bound:
\[
G_1^2 R_{\max} \Gamma_2 \gamma^H (H + 2\Gamma_2)^2 \leq \tfrac{1}{4} \sqrt{L_3 \epsilon},
\]
which is equivalent to:
\[
2\Gamma_1 \log\left( \sqrt{ \tfrac{4 G_1^2 R_{\max} \Gamma_2}{L_3^{1/2} \epsilon^{1/2} \gamma^{2\Gamma_2}} } (H + 2\Gamma_2) \right)
\leq H + 2\Gamma_2.
\]
Thus, it suffices to choose:
\[
H \geq 2\Gamma_1 \log\left( 64\Gamma_1^2 \Gamma_2 G_1^2 R_{\max} L_3^{-1/2} \epsilon^{-1/2} \right) + 6\Gamma_2.
\]

By bounding the gradient and Hessian truncation bias separately and selecting $H$ according to the derived expressions, we conclude that Lemma~\ref{lem:truncation-bias} holds. In particular, it suffices to set $H = \mathcal{O}(\Gamma_2 \log(\Gamma_2 \epsilon^{-1}))$.

\section{Proof of \cref{cor:total-sample}}
\label{proof:cor-total-sample}
We aim to compute the total number of gradient and Hessian samples needed to ensure that the variance-reduced estimators satisfy the deviation bounds with high probability across all $T$ iterations.

\subsection{Hessian Sample Complexity.}
We set $S = \frac{L_1}{L_2} \sqrt{ \frac{L_3}{\epsilon} }$ to balance the sample complexity between gradient and Hessian terms. The total Hessian sample complexity is
\begin{align*}
&T \cdot b_H + \sum_{t \bmod S \neq 0} b_{g,t}' \\
&= \mathcal{O} \left( \log(3Td/P) \cdot \epsilon^{-2.5} \right)\\
&\quad + \mathcal{O} \left( \big( \log\left( 3T/P \right) \big)^2 \epsilon^{-0.5} \epsilon^{-2} \sum_{t=0}^{T-1} \norm{\vh_{t-1}}^2 \right) \\
&= \mathcal{O} \left( \log(3Td/P) \epsilon^{-2.5} + \big(\log(3T/P)\big)^2 \epsilon^{-2.5} \sum_{t=0}^{T-1} \norm{\vh_{t-1}}^2 \right).
\end{align*}

To bound the second term, we apply the Cauchy–Schwarz inequality:
\[
\sum_{t} \|\vh_t\|^3 \cdot\sum_{t} \|\vh_t\|^3 \cdot \sum_{t} 1 \geq \left(\sum_{t} \|\vh_t\|^2\right)^3
\]
Using the fact that $\sum_{t=0}^{T-1} \norm{\vh_{t}}^3 = \mathcal{O}(1), T = \mathcal{O}(\epsilon^{-1.5})$, we have:
\[
\sum_{t=0}^{T-1} \norm{\vh_{t-1}}^2 = \mathcal{O}(\epsilon^{-0.5}),
\]
which implies:
\[
\text{Hessian sample complexity} = \mathcal{O} \left( [\log(3T/P)]^2 \epsilon^{-3} \right).
\]

\subsection{Gradient Sample Complexity.}
The gradient sample complexity is simpler:
\begin{align*}
 b_g \cdot T/S &= \mathcal{O} \left( \left[ \log\left( 3T/P \right) \right]^2 \epsilon^{-2} \cdot \epsilon^{-1.5} \cdot \epsilon^{0.5} \right) \\
 &= [\log(3T/P)]^2 \epsilon^{-3}.    
\end{align*}

Substituting $T = \mathcal{O}(\epsilon^{-1.5})$ yields the final bound:
\[
\text{Gradient: } \mathcal{O} \left( \epsilon^{-3} [\log(1/\epsilon)]^2 \right), \quad
\text{Hessian: } \mathcal{O} \left( \epsilon^{-3} [\log(1/\epsilon)]^2 \right).
\]
\qed

\section{Proof of Boundedness for Log-Linear Policy Class in \cref{sec:experiments}}
\label{app:loglinear-regularity}

We verify that the commonly used log-linear policy class satisfies the boundedness assumptions in \cref{a-1,a-2,a-3} on the first three derivatives of $\log \pi_\theta(a \mid s)$.

\paragraph{Policy Definition.}
We consider a policy of the form
\[
\pi_\theta(a \mid s) = \frac{\exp(\theta^\top \phi(s, a))}{\sum_{a'} \exp(\theta^\top \phi(s, a'))},
\]
where $\phi(s, a) \in \mathbb{R}^d$ is a fixed feature vector. This log-linear family includes softmax policies over discrete actions as a special case.

\paragraph{Derivatives of the Log-Policy.}
The first three derivatives of the log-policy take the following forms:
\begin{align*}
&\nabla \log \pi_\theta(a \mid s)
= \phi(s, a) - \bar\phi(s), \\
&\nabla^2 \log \pi_\theta(a \mid s)\\
&= -\sum_{a'} \pi_\theta(a' \mid s) \left( \phi(s, a') - \bar\phi(s) \right)\left( \phi(s, a') - \bar\phi(s) \right)^\top, \\
&\nabla^3 \log \pi_\theta(a \mid s)[\vy] \\
&= -\sum_{a'} \pi_\theta(a' \mid s) \cdot \left( \left( \phi(s, a') - \bar\phi(s) \right)^\top \vy \right) \\
&\quad\cdot \left( \phi(s, a') - \bar\phi(s) \right)\left( \phi(s, a') - \bar\phi(s) \right)^\top,
\end{align*}
where $\bar\phi(s) := \sum_{a} \pi_\theta(a \mid s) \phi(s, a)$ is the expected feature vector under $\pi_\theta$.

\paragraph{Uniform Boundedness.}
Assume there exists a constant $C_\phi > 0$ such that $\| \phi(s, a) \| \leq C_\phi$ for all $(s, a)$. Then:
\begin{itemize}
  \item First-order bound:
  \[
  \| \nabla \log \pi_\theta(a \mid s) \| \leq \| \phi(s, a) \| + \| \bar\phi(s) \| \leq 2 C_\phi,
  \]
  so we may set $G_1 = 2 C_\phi$.
  
  \item Second-order bound: each term in the sum is of the form $\vv \vv^\top$ with $\| \vv \| \leq 2 C_\phi$, so we have:
  \[
  \| \nabla^2 \log \pi_\theta(a \mid s) \| \leq 4 C_\phi^2,
  \]
  so we may set $G_2 = 4 C_\phi^2$.

  \item Third-order directional derivative: for all $\| \vy \| = 1$,
  \[
  \| \nabla^3 \log \pi_\theta(a \mid s)[\vy] \| \leq 2 C_\phi \cdot (2 C_\phi)^2 \leq 8 C_\phi^3,
  \]
  so we may set $G_3 = 8 C_\phi^3$.
\end{itemize}

Therefore, log-linear policies satisfy the regularity assumptions \ref{a-1}, \ref{a-2} and \ref{a-3} with explicit constants determined by the feature norm bound $C_\phi$.

\end{document}